\newcommand{\beq}{\begin{equation}}
\newcommand{\eeq}{\end{equation}}
\newcommand{\beqa}{\begin{eqnarray}}
\newcommand{\eeqa}{\end{eqnarray}}
\newcommand{\beqan}{\begin{eqnarray*}}
\newcommand{\eeqan}{\end{eqnarray*}}
\newcommand{\E}{\mathbb{E}}
\def \dive {\mathbf{d}}
\def \wass {w}
\def \ldive {l}
\newcommand{\ra}[1]{\renewcommand{\arraystretch}{#1}}
\newcommand{\cF}{\mathcal{F}}
\newcommand{\cE}{\mathcal{E}}
\newcommand{\bR}{\mathbb{R}}
\newcommand{\bF}{\mathbb{F}}
\newcommand{\indic}[1]{\mathbb{I}\left [ #1 \right ]}
\DeclareMathOperator*{\expect}{{\huge \mathbb{E}}}
\newcommand{\expects}{\expect\nolimits}
\newcommand{\norm}[1]{\left \| #1 \right \|}
\newcommand{\bignorm}[1]{\big \| #1 \big \|}
\newtheorem{prop}{Proposition}
\newtheorem{lem}{Lemma}
\newtheorem{thm}{Theorem}
\newcommand{\cdbar}{\, \| \,}
\newcommand{\KL}{\textrm{KL}}
\def \grad {\nabla}
\DeclareMathOperator*{\argmin}{arg\,min}
\newcommand{\eqnref}[1]{(\ref{eqn:#1})}
\newcommand{\minus}{\scalebox{0.75}[1.0]{$-$}}
\newcommand{\textblue}[1]{\textcolor{blue}{#1}}
\newcommand{\highlight}[1]{\textbf{\textblue{#1}}}
\def \cramer {Cram{\'e}r}
\def \dx {\textrm{d} x}
\def \du {\textrm{d} u}
\def \sgn {\text{sgn}}
\def \Qtheta {Q_\theta}
\def \gradtheta {{\grad_\theta}}
\def \batch {\mathbf{X}_m}
\title{The Cramer Distance as a Solution to Biased Wasserstein Gradients}
\author{Marc G. Bellemare\textsuperscript{1},
Ivo Danihelka\textsuperscript{1,3},
Will Dabney\textsuperscript{1},
Shakir Mohamed\textsuperscript{1} \AND
Balaji Lakshminarayanan\textsuperscript{1},
Stephan Hoyer\textsuperscript{2},
R\'emi Munos\textsuperscript{1}\\
\textsuperscript{1}Google DeepMind, London UK,
\textsuperscript{2}Google\\
\textsuperscript{3}CoMPLEX, Computer Science, UCL\\
\texttt{\{bellemare,danihelka,wdabney,shakir,balajiln,shoyer,munos\}@google.com} \\
}
\begin{document}

\maketitle

\begin{abstract}
The Wasserstein probability metric has received much attention from the machine learning community. Unlike the Kullback-Leibler divergence, which strictly measures change in probability, the Wasserstein metric reflects the underlying geometry between outcomes. 
The value of being sensitive to this geometry has been demonstrated, among others, in ordinal regression and generative modelling.
In this paper we describe three natural properties of probability divergences that reflect requirements from machine learning: sum invariance, scale sensitivity, and unbiased sample gradients. The Wasserstein metric possesses the first two properties but, unlike the Kullback-Leibler divergence, does not possess the third. We provide empirical evidence 
suggesting that this is a serious issue in practice. Leveraging insights from probabilistic forecasting we propose an alternative to the Wasserstein metric, the Cram\'er distance. We show that the Cram\'er distance possesses all three desired properties, combining the best of the Wasserstein and Kullback-Leibler divergences. To illustrate the relevance of the Cram\'er distance in practice we design a new algorithm, the Cram{\'e}r Generative Adversarial Network (GAN), and show that it performs significantly better than the related Wasserstein GAN.
\end{abstract}

\section{Introduction}

In machine learning, the Kullback-Leibler (KL) divergence is perhaps the most common way of assessing how well a probabilistic model explains observed data. Among the reasons for its popularity is that it is directly related to maximum likelihood estimation and is easily optimized. However, the KL divergence suffers from a significant limitation: it does not take into account how close two outcomes might be, but only their relative probability. This closeness can matter a great deal: in image modelling, for example, perceptual similarity is key \citep{rubner2000earth,gao2016distributionally}. Put another way, the KL divergence cannot reward a model that ``gets it almost right''.

To address this limitation, researchers have turned to the Wasserstein metric, which does incorporate the underlying geometry between outcomes. The Wasserstein metric can be applied to distributions with non-overlapping supports, and has good out-of-sample performance \citep{esfahani2015data}. Yet, practical applications of the Wasserstein distance, especially in deep learning, remain tentative. In this paper we provide a clue as to why that might be: estimating the Wasserstein metric from samples yields biased gradients, and may actually lead to the wrong minimum. This precludes using stochastic gradient descent (SGD) and SGD-like methods, whose fundamental mode of operation is sample-based, when optimizing for this metric.

As a replacement we propose the Cram\'er distance \citep{szekely02estatistics, rizzo2016energy}, also known as the continuous ranked probability score in the probabilistic forecasting literature \citep{gneiting07strictly}. The Cram\'er distance, like the Wasserstein metric, respects the underlying geometry but also has unbiased sample gradients. To underscore our theoretical findings, we demonstrate a significant quantitative difference between the two metrics when employed in typical machine learning scenarios: categorical distribution estimation, regression, and finally image generation.

\section{Probability Divergences and Metrics}

In this section we provide the notation to mathematically distinguish the Wasserstein metric (and later, the Cram\'er distance) from the Kullback-Leibler divergence and probability distances such as the total variation. 
The experimentally-minded reader may choose to skip to Section \ref{sec:impact_wasserstein_bias}, where we show that minimizing the sample Wasserstein loss leads to significant degradation in performance.

Let $P$ be a probability distribution over $\bR$. When $P$ is continuous, we will assume it has density $\mu_P$. The expectation of a function $f : \bR \to \bR$ with respect to $P$ is
\begin{equation*}
\expect_{x \sim P} f(x) := \int_{-\infty}^\infty f(x) P(\dx) = \left \{
\begin{array}{ll}
\int f(x) \mu_P(x) \dx & \text{if $P$ is continuous, and} \vspace{0.5em}\\
\sum f(x) P(x) & \text{if $P$ is discrete}.
\end{array} \right .
\end{equation*}
We will suppose all expectations and integrals under consideration are finite. We will often associate $P$ to a random variable $X$, such that for a subset of the reals $A \subseteq \bR$, we have $\Pr \{ X \in A \} = P(A)$. The (cumulative) distribution function of $P$ is then
\begin{equation*}
F_P(x) := \Pr \{ X \le x \} = \int_{-\infty}^x P(dx) .
\end{equation*}
Finally, the inverse distribution function of $P$, defined over the interval $(0, 1]$, is
\begin{equation*}
F^{-1}_P(u) := \inf \{ x : F_P(x) = u \}.
\end{equation*}
\subsection{Divergences and Metrics}

Consider two probability distributions $P$ and $Q$ over $\bR$. A \emph{divergence} $\dive$ is a mapping $(P, Q) \mapsto \bR^+$ with $\dive(P, Q) = 0$ if and only if $P = Q$ almost everywhere. A popular choice is the Kullback-Leibler (KL) divergence
\begin{equation*}
\KL(P \cdbar Q) := \int_{-\infty}^\infty \log \frac{P(\dx)}{Q(\dx)} P(\dx),
\end{equation*}
with $\KL(P \cdbar Q) = \infty$ if $P$ is not absolutely continuous w.r.t. $Q$. The KL divergence, also called relative entropy, measures the amount of information needed to encode the  change in probability from $Q$ to $P$ \citep{cover91elements}.

A \emph{probability metric} is a divergence which is also symmetric ($\dive(P, Q) = \dive(Q, P)$) and respects the triangle inequality: for any distribution $R$, $\dive(P, Q) \le \dive(P, R) + \dive(R, Q)$. We will use the term \emph{probability distance} to mean a symmetric divergence satisfying the relaxed triangle inequality $\dive(P, Q) \le c \left [ \dive(P, R) + \dive(R, Q) \right ]$ for some $c \ge 1$.

We will first study the $p$-Wasserstein metrics $\wass_p$ \citep{dudley2002real}.
For $1 \le p < \infty$, a practical definition is through the inverse distribution functions of $P$ and $Q$:
\begin{equation}
\wass_p(P, Q) := \left ( \int_0^1 \big |F^{-1}_P(u) - F^{-1}_Q(u)\big |^p \du \right )^{1/p}.\label{eqn:primal_wasserstein}
\end{equation}
We will sometimes find it convenient to deal with the $p^\mathit{th}$ power of the metric, which we will denote by $\wass_p^p$; note that $\wass_p^p$ is not a metric proper, but is a probability distance.

We will be chiefly concerned with the 1-Wasserstein metric, which is most commonly used in practice. 
The 1-Wasserstein metric has a dual form which is theoretically convenient and which we mention here for completeness. %
Define $\bF_\infty$ to be the class of $1$-Lipschitz functions. Then
\begin{equation}\label{eqn:ipm_wasserstein}
\wass_1(P, Q) := \sup_{f \in \bF_\infty} \big | \expect_{x \sim P} f(x) - \expect_{x \sim Q} f(x) \big |.
\end{equation}
This is a special case of the celebrated Monge-Kantorovich duality \citep{rachev13methods}, and is the integral probability metric (IPM) with function class $\bF_\infty$ \citep{muller97integral}. We invite the curious reader to consult these two sources as a starting point on this rich topic.

\subsection{Properties of a Divergence}

As noted in the introduction, the fundamental difference between the KL divergence and the Wasserstein metric is that the latter is sensitive not only to change in probability but also to the geometry of possible outcomes. %
To capture this notion we now introduce the concept of an \emph{ideal divergence}.

Consider a divergence $\dive$, and for two random variables $X, Y$ with distributions $P, Q$  write $\dive(X, Y) := \dive(P, Q)$. We say that $\dive$ is \emph{scale sensitive} (of order $\beta$), i.e. it has property \textbf{(S)}, if there exists a $\beta > 0$ such that for all $X$, $Y$, and a real value $c > 0$,
\begin{equation}
\dive(cX, cY) \le |c|^\beta \dive(X, Y). \tag{\textbf{S}}
\end{equation}
A divergence $\dive$ has property \textbf{(I)}, i.e. it is \emph{sum invariant}, if whenever $A$ is independent from $X$, $Y$
\begin{equation}
\dive(A + X, A + Y) \le \dive(X, Y). \tag{\textbf{I}}
\end{equation}
Following \citet{zolotarev76metric}, an ideal divergence $\dive$ is one that possesses both (S) and (I).\footnote{Properties (S) and (I) are called \emph{regularity} and \emph{homogeneity} by \citeauthor{zolotarev76metric}; we believe our choice of terms is more machine learning-friendly.}

We can illustrate the sensitivity of ideal divergences to the value of outcomes by considering Dirac functions $\delta_x$ at different values of $x$. If $\dive$ is scale sensitive of order $\beta = 1$ then the divergence $\dive(\delta_0, \delta_{1/2})$ can be no more than half the divergence $\dive(\delta_0, \delta_1)$. If $\dive$ is \emph{sum invariant}, then the divergence of $\delta_0$ to $\delta_{1}$ is equal to the divergence of the same distributions shifted by a constant $c$, i.e. of $\delta_{c}$ to $\delta_{1 + c}$. 
As a concrete example of the importance of these properties, \citet{bellemare17distributional} recently demonstrated the importance of ideal metrics in reinforcement learning, specifically their role in providing the contraction property for a distributional form of the Bellman update.

In machine learning we often view the divergence $\dive$ as a loss function. Specifically, let $Q_\theta$ be some distribution parametrized by $\theta$, and consider the loss $\theta \mapsto \dive(P, \Qtheta)$. We are interested in minimizing this loss, that is finding $\theta^* := \argmin_\theta \dive(P, Q_\theta)$. We now describe a third property based on this loss, which we call \emph{unbiased sample gradients}.

Let $\batch := X_1, X_2, \dots, X_m$ be independent samples from $P$ and define the empirical distribution $\hat P_m := \hat P_m(\batch) := \tfrac{1}{m} \sum_{i=1}^m \delta_{X_i}$ (note that $\hat P_m$ is a random quantity). From this, define the \emph{sample loss} $\theta \mapsto \dive(\hat P_m, \Qtheta)$. We say that $\dive$ has unbiased sample gradients when the expected gradient of the sample loss equals the gradient of the true loss for all $P$ and $m$:
\begin{equation}
\expect_{\batch \sim P} \gradtheta \dive(\hat P_m, \Qtheta) = \gradtheta \dive(P, \Qtheta) \tag{\textbf{U}} .
\label{eqn:unbiased_sample_gradients}
\end{equation}

The notion of unbiased sample gradients is ubiquitous in machine learning and in particular in deep learning. Specifically, if a divergence $\dive$ does not possess (U) then minimizing it with stochastic gradient descent may not converge, or it may converge to the wrong minimum. Conversely, if $\dive$ possesses (U) then we can guarantee that the distribution which minimizes the expected sample loss
is $Q = P$.
In the probabilistic forecasting literature, this makes $\dive$ a \emph{proper scoring rule} \citep{gneiting07strictly}.

We now characterize the KL divergence and the Wasserstein metric in terms of these properties. As it turns out, neither simultaneously possesses both (U) and (S).

\begin{prop}\label{prop:kl_prop}
The KL divergence has unbiased sample gradients (U), but is not scale sensitive (S).
\end{prop}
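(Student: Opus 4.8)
The plan is to handle the two claims separately, since they rely on completely different mechanisms. For the unbiased-gradient property \textbf{(U)}, the key observation is that the only part of $\KL(\hat P_m \cdbar \Qtheta)$ that depends on $\theta$ is the cross-entropy term, and this term is \emph{linear} in the first argument. Writing the density $\mu_{\Qtheta}$ of $\Qtheta$, I would split the KL into a $\theta$-independent entropy term of $\hat P_m$ (formally infinite for a discrete measure against a continuous one, but irrelevant to the gradient) and a cross-entropy term, then expand
\begin{equation*}
\gradtheta \KL(\hat P_m \cdbar \Qtheta) = -\gradtheta \int \log \mu_{\Qtheta}(x)\, \hat P_m(\dx) = -\frac{1}{m}\sum_{i=1}^m \gradtheta \log \mu_{\Qtheta}(X_i).
\end{equation*}
Taking the expectation over $\batch \sim P$ and using that each $X_i$ is distributed as $P$ yields $-\expect_{x \sim P}\gradtheta \log \mu_{\Qtheta}(x)$, which is exactly $\gradtheta \KL(P \cdbar \Qtheta)$ (the entropy of $P$ being another $\theta$-independent constant). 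The only nontrivial point is justifying the interchange of $\gradtheta$ with the expectation/integral, which I would dispatch under the standing assumption that all integrals are finite together with a dominated-convergence-type regularity condition on $\theta \mapsto \log \mu_{\Qtheta}$.

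For the failure of scale sensitivity \textbf{(S)}, the plan is to first establish that the KL divergence is in fact scale \emph{invariant}, i.e. $\KL(cX \cdbar cY) = \KL(X \cdbar Y)$ for every $c > 0$. This follows from the change of variables $y = x/c$ applied to the densities of $cX$ and $cY$, under which the Jacobian factors cancel and the likelihood ratio $\mu_P(x/c)/\mu_Q(x/c)$ is left unchanged; intuitively, the KL sees only relative probabilities, which a common rescaling of both distributions does not perturb. Given this, I would produce a counterexample: fix any $X, Y$ with $0 < \KL(X \cdbar Y) < \infty$ and take $c < 1$. Then property \textbf{(S)} would demand $\KL(X \cdbar Y) = \KL(cX \cdbar cY) \le |c|^\beta \KL(X \cdbar Y)$, i.e. $1 \le |c|^\beta$; letting $c \to 0^+$ forces $|c|^\beta \to 0$ since $\beta > 0$, a contradiction. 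Hence no $\beta > 0$ can satisfy \textbf{(S)}.

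The routine parts are the change-of-variables computation and the final inequality manipulation. The main obstacle I anticipate is purely technical rather than conceptual: carefully justifying the exchange of differentiation and integration in the \textbf{(U)} argument, and being precise about the discrete-versus-continuous mismatch between $\hat P_m$ and $\Qtheta$ so that the cross-entropy term (and its gradient) is well defined even though the full KL may be infinite. Both are handled by restricting attention to the $\theta$-dependent term and invoking the finiteness assumptions already in force.
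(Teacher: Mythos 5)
Your proposal is correct, and the \textbf{(U)} half is essentially the argument the paper has in mind: the paper simply declares this part ``well-known'' (it is the linearity-in-the-first-argument fact underlying stochastic gradient methods for classification), so your explicit computation of $\gradtheta \KL(\hat P_m \cdbar \Qtheta)$ as an empirical average of score terms $-\gradtheta \log \mu_{\Qtheta}(X_i)$ fills in exactly what the paper leaves implicit, including the caveat that only the $\theta$-dependent cross-entropy term is meaningful when $\hat P_m$ is singular with respect to $\Qtheta$. Where you genuinely diverge is the failure of \textbf{(S)}: the paper cites \citet{chung87discounted} for the fact that the total variation distance lacks (S) and transfers this to the KL divergence via Pinsker's inequality ($\TV \le \sqrt{\KL/2}$, so if (S) forced $\KL(cX \cdbar cY) \to 0$ as $c \to 0$, the scale-invariant $\TV$ would be driven to zero as well), whereas you prove directly that the KL divergence itself is scale invariant by a change of variables and then let $c \to 0^+$. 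Your route is more elementary and self-contained---it avoids both the external citation and Pinsker, and it isolates the real reason for the failure, namely that likelihood ratios are unchanged by a common bijective rescaling---while the paper's route reuses a known result and gets the analogous statement for total variation for free. Both are valid; the only point to make explicit in your version is that the counterexample pair must satisfy $0 < \KL(X \cdbar Y) < \infty$, which requires absolute continuity of $P$ with respect to $Q$, but this is easy to arrange (e.g.\ two distinct Gaussians).
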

\begin{prop}\label{prop:wasserstein_prop}
The Wasserstein metric is ideal (I, S), but does not have unbiased sample gradients.
\end{prop}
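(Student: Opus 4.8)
The plan is to treat the two claims separately: first verify the ideal properties (S) and (I), then construct an explicit counterexample defeating (U).

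\textbf{Scale sensitivity and sum invariance.} For (S) I would work directly from the inverse-CDF expression \eqnref{primal_wasserstein}. Scaling a random variable $X$ by $c>0$ rescales its quantile function, $F^{-1}_{cX}(u) = c\,F^{-1}_X(u)$, so substituting into \eqnref{primal_wasserstein} pulls out a factor $|c|$ and yields $\wass_p(cX, cY) = |c|\,\wass_p(X,Y)$; hence (S) holds with $\beta = 1$ (in fact with equality). For (I) the cleanest route is the coupling (primal) characterization of $\wass_p$, the standard equivalent of \eqnref{primal_wasserstein}: let $(X,Y)$ be the optimal (comonotone) coupling attaining $\wass_p(X,Y)$ and let $A$ be independent of it. Then $(A+X, A+Y)$ is a valid coupling of $A+X$ and $A+Y$, so $\wass_p(A+X,A+Y)^p \le \E|(A+X)-(A+Y)|^p = \E|X-Y|^p = \wass_p(X,Y)^p$, which is exactly (I). For the $p=1$ case one can instead argue from the dual form \eqnref{ipm_wasserstein}: conditioning on $A=a$ and noting that $x\mapsto f(a+x)$ is $1$-Lipschitz whenever $f$ is, the supremum defining $\wass_1(A+X,A+Y)$ is bounded by $\wass_1(X,Y)$. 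Either way these steps are short and I expect no difficulty.

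\textbf{Failure of (U).} The strategy is to exhibit explicit $P$, $\Qtheta$ and $m$ for which the two sides of \eqnref{unbiased_sample_gradients} disagree. The key structural observation is that a single sample collapses the optimal coupling into an \emph{independent} one: for $m=1$ we have $\hat P_1 = \delta_{X_1}$, and since $\wass_1(\delta_x, Q) = \E_{Y\sim Q}|x-Y|$, taking the outer expectation gives
\[
\E_{X_1 \sim P}\, \wass_1(\hat P_1, Q) = \E_{X\sim P,\, Y\sim Q}\,|X - Y|,
\]
with $X,Y$ \emph{independent}. By contrast the true loss $\wass_1(P,Q)$ uses the optimal (comonotone) coupling, and these two functionals of $Q$ differ in general; indeed, by convexity of $\wass_1(\cdot,Q)$ and Jensen's inequality the expected sample loss overestimates the true loss, the gap being precisely the cost of using independence rather than the optimal coupling. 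I would then take $P = \tfrac12\delta_0 + \tfrac12\delta_1$ and the translation family $\Qtheta = \tfrac12\delta_\theta + \tfrac12\delta_{1+\theta}$ (so $Q_0 = P$), compute both sides for $\theta \in (0,1)$, and observe that $\gradtheta \wass_1(P,\Qtheta) = \sgn(\theta)$ while $\E_{X_1}\gradtheta \wass_1(\hat P_1,\Qtheta) = \tfrac12\sgn(\theta)$; the mismatched factor witnesses the failure of (U). Gradients are evaluated away from the kink at $\theta=0$, where interchange of $\E$ and $\gradtheta$ is justified by Lipschitzness. A conceptually cleaner packaging invokes the paper's own remark that (U) would force $Q=P$ to minimize the expected sample loss: since the $m=1$ expected loss above is the cross term of the energy distance, it is minimized over $Q$ by the Dirac at the median of $P$ rather than by $P$ itself (strictly so for non-degenerate, asymmetric $P$), so (U) cannot hold.

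\textbf{Main obstacle.} The subtle point is choosing the right instance. The most natural first attempt, $m=1$ against a Dirac target $\Qtheta=\delta_\theta$, turns out to be unbiased (both gradients equal $2F_P(\theta)-1$), and $\wass_2^2$ against a Dirac is unbiased as well; the bias surfaces only once $\Qtheta$ carries spread, so that the independent coupling forced by sampling is strictly suboptimal relative to the comonotone one. Identifying this mechanism, and selecting $P,\Qtheta$ that expose it while keeping the arithmetic transparent, is the crux of the argument.
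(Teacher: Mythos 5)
Your proposal is correct, but it reaches the conclusion by a genuinely different route than the paper. The paper's own proof of this proposition is essentially a pair of pointers: (I) and (S) are credited to Bickel and Freedman, and the failure of (U) is delegated to the proof of Theorem~\ref{thm:wasserstein_bias}, where $P$ and $Q_\theta$ are both Bernoulli on $\{0,1\}$ and the parameter moves probability mass, giving $\gradtheta \wass_p^p(P,Q_\theta)=\sgn(\theta-\theta^*)$ versus $\E\,\hat g = 2\Pr\{\hat\theta<\theta\}-1$, and then, with more work, a non-vanishing minimax lower bound of $2e^{-2}$ on the bias for every $m$. You instead prove (S) and (I) from scratch --- quantile scaling for (S), and the coupling/IPM translation argument for (I), the latter being exactly the argument the paper deploys for $\ldive_p$ in Theorem~\ref{thm:cramer_properties} --- and you defeat (U) with a location family: $P=\tfrac12\delta_0+\tfrac12\delta_1$, $Q_\theta=\tfrac12\delta_\theta+\tfrac12\delta_{1+\theta}$, $m=1$, where the expected sample gradient is $\tfrac12$ against a true gradient of $1$ for $\theta\in(0,1)$; the arithmetic checks out. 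Your identification of the mechanism --- that $\E_{X_1}\wass_1(\delta_{X_1},Q)$ equals the independent-coupling cost $\E|X-Y|$, so bias appears precisely when $Q$ has spread and the comonotone coupling is strictly cheaper --- is a conceptual complement absent from the paper, and your observation that Dirac targets yield unbiased gradients is correct and explains why the instance must be chosen with care. What the paper's route buys that yours does not is quantitative strength: Theorem~\ref{thm:wasserstein_bias} shows the bias cannot be removed by increasing $m$ and that the expected sample loss has the wrong minimizer, whereas your single-instance counterexample only establishes that (U) fails --- which is, however, all that Proposition~\ref{prop:wasserstein_prop} asserts.
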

We will provide a proof of the bias in the sample Wasserstein gradients just below; the proof of the rest and later results are provided in the appendix.

\section{Bias in the Sample Gradient Estimates of the Wasserstein Distance}

In this section we give theoretical evidence of serious issues with gradients of the sample Wasserstein loss. We will consider a simple Bernoulli distribution $P$ with parameter $\theta^*\in (0,1)$, which we would like to estimate from samples. Our model is $\Qtheta$, a Bernoulli distribution with parameter $\theta$. We study the behaviour of stochastic gradient descent w.r.t. $\theta$ over the sample Wasserstein loss, specifically using the $p^{th}$ power of the metric (as is commonly done to avoid fractional exponents).
Our results build on the example given by \citet{bellemare17distributional}, whose result is for $\theta^* = \tfrac{1}{2}$ and $m=1$.

Consider the estimate $\grad_\theta \wass^p_p(\hat P_m, Q_\theta)$ of the gradient $\grad_\theta \wass^p_p(P, Q_\theta)$.
We now show that even in this simplest of settings, this estimate is biased, and we exhibit a lower bound on the bias for any value of $m$.
Hence the Wasserstein metric does not have property (U). 
More worrisome still, we show that the minimum of the expected empirical Wasserstein loss $\theta\mapsto \expects_{\batch} \big[ \wass^p_p(\hat P_m, Q_\theta)\big]$ is not the minimum of the Wasserstein loss $\theta\mapsto \wass^p_p(P, Q_\theta)$.
We then conclude that minimizing the sample Wasserstein loss by stochastic gradient descent may in general fail to converge to the minimum of the true loss.

\begin{thm}\label{thm:wasserstein_bias}
Let $\hat P_m=\frac 1m\sum_{i=1}^m \delta_{X_i}$ be the empirical distribution derived from $m$ independent samples $\batch = X_1,\dots, X_m$ drawn from a Bernoulli distribution $P$. Then for all $1 \le p < \infty$,
\begin{itemize}[leftmargin=4ex,topsep=0pt,itemsep=-1ex,partopsep=1ex,parsep=1ex]
\item {\bf Non-vanishing minimax bias} of the sample gradient. For any $m \ge 1$ there exists a pair of Bernoulli distributions $P$, $Q_\theta$ for which
\begin{equation*}
\Big| \expect_{\batch \sim P} \big[ \gradtheta w^p_p(\hat P_m, Q_\theta)\big] - \gradtheta w^p_p(P, Q_\theta) \Big| \ge 2e^{-2};
\end{equation*}

\item {\bf Wrong minimum} of the sample Wasserstein loss. The minimum of the expected sample loss $\tilde\theta = \argmin_\theta \expects_{\batch} \big[ w^p_p(\hat P_m, Q_\theta)\big]$ is in general different from the minimum of the true Wasserstein loss $\theta^*=\argmin_\theta w^p_p(P, Q_\theta)$.

\item {\bf Deterministic solutions} to stochastic problems. For any $m \ge 1$, there exists a distribution $P$ with nonzero entropy whose sample loss is minimized by a 
distribution $Q_{\tilde \theta}$ with zero entropy.
\end{itemize}
\end{thm}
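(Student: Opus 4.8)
The plan is to exploit the fact that for two Bernoulli distributions the $p$-th power of the Wasserstein metric collapses to a single elementary quantity. The first step is to compute $\wass_p^p$ between Bernoullis directly from the inverse-CDF form \eqnref{primal_wasserstein}. If $P$ and $\Qtheta$ are Bernoulli with parameters $\theta^*$ and $\theta$, their inverse distribution functions equal $0$ on an initial subinterval of $(0,1]$ and $1$ afterwards, and they disagree---each contributing a pointwise difference of magnitude $1$---exactly on an interval of length $|\theta^* - \theta|$. Hence $\wass_p^p(P, \Qtheta) = |\theta^* - \theta|$ independently of $p$. The same computation applied to the empirical measure gives $\wass_p^p(\hat P_m, \Qtheta) = |\htheta - \theta|$, where $\htheta := \tfrac{1}{m}\sum_{i=1}^m X_i$ is the sample mean, taking values in $\{0, 1/m, \dots, 1\}$ with $m\htheta \sim \mathrm{Binomial}(m, \theta^*)$.

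Next I would analyse the expected sample loss $L(\theta) := \expects_{\batch} \wass_p^p(\hat P_m, \Qtheta) = \expects_{\batch} |\htheta - \theta|$. Being an average of absolute-value functions of $\theta$, it is convex and piecewise linear, and its set of minimizers over $\theta$ is precisely the set of medians of the random variable $\htheta$. The key observation is that this median can sit at the boundary of $[0,1]$: the minimizer equals $0$ whenever the single atom at $0$ already carries at least half the mass, i.e. whenever $\Pr\{\htheta = 0\} = (1-\theta^*)^m > \tfrac12$. Concretely, for $\theta$ just above $0$ the one-sided derivative of $L$ is $2(1-\theta^*)^m - 1$, which is strictly positive under this condition, forcing the minimizer $\tilde\theta$ to be $0$.

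To finish I would choose a $\theta^*$ witnessing the claim. The inequality $(1-\theta^*)^m > \tfrac12$ holds exactly for $\theta^* < 1 - 2^{-1/m}$, and since $1 - 2^{-1/m} > 0$ for every $m \ge 1$, the interval $(0,\, 1 - 2^{-1/m})$ is nonempty. Picking any $\theta^*$ in it makes $P$ a genuine Bernoulli distribution with nonzero entropy, while the expected sample loss is minimized by $Q_{\tilde\theta}$ with $\tilde\theta = 0$, a deterministic (zero-entropy) distribution. This establishes the final statement, and the same reduction $\wass_p^p = |\text{parameter difference}|$ feeds directly into the non-vanishing bias and wrong-minimum parts.

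I expect the only real obstacle to be conceptual rather than computational: recognizing that the expected sample loss is minimized by a \emph{median} of $\htheta$ rather than by its mean (the mean $\theta^*$ would of course recover the correct answer), and that such a median can degenerate to an endpoint of $[0,1]$. Once that is in hand, the verification via the one-sided derivative and the explicit choice of $\theta^*$ are routine.
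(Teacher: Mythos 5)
Your reduction $\wass_p^p(P,\Qtheta)=|\theta^*-\theta|$ and $\wass_p^p(\hat P_m,\Qtheta)=|\htheta-\theta|$ is exactly the paper's starting point, and your treatment of the second and third bullets is correct and essentially identical to the paper's: the expected sample loss $\expects_{\batch}|\htheta-\theta|$ is minimized at a median of $\htheta$ rather than at its mean $\theta^*$, and that median sits at an endpoint of $[0,1]$ once the extreme atom of the binomial carries mass greater than $\tfrac12$. (The paper places $\theta^*$ near $1$, requiring $(\theta^*)^m>\tfrac12$, where you place it near $0$ with $(1-\theta^*)^m>\tfrac12$; the two are mirror images and both are fine. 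Your boundary example also already witnesses the ``wrong minimum'' bullet, since there the median $0$ differs from the mean $\theta^*$.)

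The gap is in the first bullet: the quantitative bound $2e^{-2}$ does not ``feed directly'' from the reduction and you never derive it. You have the right object --- the expected sample gradient is $\expects_{\batch}\sgn(\theta-\htheta)=2\Pr\{\htheta<\theta\}-1$, i.e.\ the one-sided derivative you already wrote down at $\theta=0^+$ --- but to get a bias bounded below by a constant \emph{uniformly in $m$} you must exhibit, for each $m$, a specific pair $(\theta^*,\theta)$ and evaluate $\Pr\{\htheta\ge\theta\}$ there. The paper's choice is $\theta^*=\tfrac{m-1}{m}$ and $\theta\in(\tfrac{m-1}{m},1)$, for which the true gradient is $+1$ while $\Pr\{\htheta\ge\theta\}=\Pr\{\htheta=1\}=(\theta^*)^m$, giving a bias of $2(1-\tfrac1m)^m\ge 2e^{-2}$ for $m\ge 2$ (the case $m=1$ needs a separate, easier choice, e.g.\ $\theta^*=\tfrac12$). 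The essential point your write-up is missing is that $\theta^*$ must be tuned to $m$ --- placed at distance $\Theta(1/m)$ from the boundary, or alternatively at $\tfrac12$ with $\theta$ within $O(1/\sqrt m)$ --- so that the probability of the relevant extreme empirical outcome stays bounded away from zero as $m\to\infty$. Once you add that computation, your argument matches the paper's in full.
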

Taken as a whole, Theorem \ref{thm:wasserstein_bias} states that we cannot in general minimize the Wasserstein loss using naive stochastic gradient descent methods. Although our result does not imply the lack of a stochastic optimization procedure for this loss,\footnote{For example, if $P$ has finite support keeping track of the empirical distribution suffices.} we believe our result to be serious cause for concern.
We leave as an open question whether an unbiased optimization procedure exists and is practical.

\subsection*{Wasserstein Bias in the Literature}

Our result is surprising given the prevalence of the Wasserstein metric in empirical studies. We hypothesize that this bias exists in published results and is an underlying cause of learning instability and poor convergence often remedied to by heuristic means. For example,  \citet{frogner15learning} and \citet{montavon2016wasserstein} reported the need for a mixed KL-Wasserstein loss to obtain good empirical results, with the latter explicitly discussing the issue of wrong minima when using Wasserstein gradients.

We remark that our result also applies to the dual \eqnref{ipm_wasserstein}, since the losses are the same. This dual was recently considered by \citet{arjovsky17wasserstein} as an alternative loss to the primal \eqnref{primal_wasserstein}.
The adversarial procedure proposed by the authors is a two time-scale process which first maximizes \eqnref{ipm_wasserstein} w.r.t $f \in \bF_\infty$ using $m$ samples, then takes a single stochastic gradient step w.r.t. $\theta$. Interestingly, this approach does seem to provide unbiased gradients as $m \to \infty$. However, the cost of a single gradient is now significantly higher, and for a fixed $m$ we conjecture that the minimax bias remains.

\section{The Cram\'er Distance}\label{sec:cramer_distance}

We are now ready to describe an alternative to the Wasserstein metric, the Cram{\'e}r distance \citep{szekely02estatistics, rizzo2016energy}. As we shall see, the Cram{\'e}r distance has the same appealing properties as the Wasserstein metric, but also provides us with unbiased sample gradients. As a result, we believe this underappreciated distance is strictly superior to the Wasserstein metric for machine learning applications.

\subsection{Definition and Analysis}

Recall that for two distributions $P$ and $Q$ over $\bR$, their (cumulative) distribution functions are respectively $F_P$ and $F_Q$. The Cram{\'e}r distance between $P$ and $Q$ is %
\begin{equation*}
\ldive^2_2(P, Q) := \int_{-\infty}^\infty (F_P(x) - F_Q(x))^2 \dx .
\end{equation*}
Note that as written, the Cram\'er distance is not a metric proper. However, its square root is, and is a member of the $\ldive_p$ family of metrics
\begin{equation*}
  \ldive_p(P, Q) := \left ( \int_{-\infty}^\infty |F_P(x) - F_Q(x)|^p \dx \right )^{1/p} .
\end{equation*}
The $\ldive_p$ and Wasserstein metrics are identical at $p=1$, but are otherwise distinct. Like the Wasserstein metrics, the $\ldive_p$ metrics have dual forms as integral probability metrics \citep[see][for a proof]{dedecker07empirical}:
\begin{equation}
\ldive_p(P, Q) = \sup_{f \in \bF_q} \big | \expect_{x \sim P} f(x) - \expect_{x \sim Q} f(x) \big |, \label{eqn:cramer_ipm}
\end{equation}
where $\bF_q := \{ f : f \text{ is absolutely continuous}, \big \| \frac{\text{d} f}{\text{d}x} \big \|_q \le 1 \}$ and $q$ is the conjugate exponent of $p$, i.e. $p^{-1} + q^{-1} = 1$.\footnote{This relationship is the reason for the notation $\bF_\infty$ in the definition the dual of the 1-Wasserstein \eqnref{ipm_wasserstein}.}
It is this dual form that we use to prove that the Cram\'er distance has property (S).
\begin{thm}\label{thm:cramer_properties}
Consider two random variables $X$, $Y$, a random variable $A$ independent of $X, Y$, and a real value $c > 0$. Then for $1 \le p \le \infty$,
\begin{equation*}
\textbf{(I) } \, \ldive_p(A + X, A + Y) \le \ldive_p(X, Y) \qquad \textbf{(S) } \, \ldive_p(cX, cY) \le |c|^{1/p} \ldive_p(X, Y) .
\end{equation*}
Furthermore, the Cram\'er distance has unbiased sample gradients. That is, given $\batch := X_1, \dots, X_m$ drawn from a distribution $P$, the empirical distribution $\hat P_m := \tfrac{1}{m} \sum_{i=1}^m \delta_{X_i}$, and a distribution $\Qtheta$,
\begin{equation*}
\expect_{\batch \sim P} \gradtheta l^2_2 (\hat P_m, \Qtheta) = \gradtheta l^2_2(P, \Qtheta),
\end{equation*}
and of all the $l_p^p$ distances, only the Cram\'er ($p = 2$) has this property.
\end{thm}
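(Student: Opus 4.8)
The plan is to treat the three assertions separately, dispatching (S) and (I) with elementary manipulations of the cumulative distribution functions and reserving the substance for the unbiasedness and uniqueness claims. For scale sensitivity, I would use that for $c>0$ the distribution function obeys $F_{cX}(x)=F_X(x/c)$; substituting $u=x/c$ in $\ldive_p^p(cX,cY)=\int|F_X(x/c)-F_Y(x/c)|^p\,\dx$ produces a factor $c$ and yields $\ldive_p^p(cX,cY)=c\,\ldive_p^p(X,Y)$ exactly, so that taking $p$-th roots gives (S) with equality. For sum invariance I would write $F_{A+X}(x)=\expect_A[F_X(x-A)]$, so $F_{A+X}(x)-F_{A+Y}(x)=\expect_A[F_X(x-A)-F_Y(x-A)]$; applying Jensen's inequality to the convex map $t\mapsto|t|^p$ inside the expectation, then integrating over $x$ and swapping the order of integration via Fubini together with translation invariance of Lebesgue measure, collapses the bound to $\ldive_p^p(X,Y)$. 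Both arguments extend to $p=\infty$ with the supremum replacing the integral.

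For the unbiasedness of the Cram\'er distance ($p=2$), the crux is that the gradient is \emph{linear} in the empirical distribution function. I would expand the square $\ldive_2^2(\hat P_m,\Qtheta)=\int(F_{\hat P_m}(x)-F_{\Qtheta}(x))^2\,\dx$, differentiate under the integral sign in $\theta$ (the $\theta$-free term $\int F_{\hat P_m}^2$ drops out), and obtain $\gradtheta\ldive_2^2(\hat P_m,\Qtheta)=2\int(F_{\Qtheta}(x)-F_{\hat P_m}(x))\,\gradtheta F_{\Qtheta}(x)\,\dx$. Since this integrand involves the samples only through the single linear term $F_{\hat P_m}(x)$, taking $\expect_{\batch\sim P}$ and using the unbiasedness of the empirical CDF, $\expect_{\batch\sim P}[F_{\hat P_m}(x)]=\expect_{\batch\sim P}[\tfrac1m\sum_i\indic{X_i\le x}]=F_P(x)$, simply replaces $F_{\hat P_m}$ by $F_P$ and reproduces $\gradtheta\ldive_2^2(P,\Qtheta)$ verbatim.

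For the uniqueness claim — that no other $\ldive_p^p$ enjoys (U) — I would exhibit an explicit Bernoulli counterexample rather than argue abstractly. Take $P$ and $\Qtheta$ Bernoulli on $\{0,1\}$ with parameters $\theta^*$ and $\theta$; then the CDFs differ only on $[0,1)$, where the gaps are constant, giving the closed forms $\ldive_p^p(P,\Qtheta)=|\theta-\theta^*|^p$ and, for $m=1$, $\expects_{\batch}[\ldive_p^p(\hat P_1,\Qtheta)]=(1-\theta^*)\theta^p+\theta^*(1-\theta)^p$. Differentiating both and evaluating at $\theta=\theta^*$ shows the true gradient vanishes (for $p>1$), while the expected sample gradient equals $p\,\theta^*(1-\theta^*)\big[(\theta^*)^{p-2}-(1-\theta^*)^{p-2}\big]$, which is nonzero whenever $p\neq2$ and $\theta^*\neq\tfrac12$; the residual case $p=1$ is already covered since $\ldive_1=\wass_1$. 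This computation simultaneously certifies that (U) fails for every $p\neq2$ and recovers $p=2$ as the unique case where the bracketed factor vanishes identically.

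The step I expect to require the most care is the interchange of differentiation with the integral over $x$ (and, in the $p\neq2$ discussion, of expectation with the $x$-integral): I would need a dominated-convergence / Leibniz-rule justification, which is where mild regularity of $\theta\mapsto F_{\Qtheta}$ and integrability of $\gradtheta F_{\Qtheta}$ enter. By contrast the algebraic heart of the unbiasedness argument is genuinely easy — it is precisely the exponent $p=2$ that renders the integrand affine in $F_{\hat P_m}$, so that Jensen becomes an equality rather than a strict inequality, and the Bernoulli computation makes the failure at every $p\neq2$ fully explicit.
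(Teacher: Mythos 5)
Your proposal is correct, and two of its four components diverge from the paper's own argument in instructive ways. Your (S) and (U) arguments coincide with the paper's: change of variables $z = x/c$ for (S), and for (U) the observation that the $p=2$ gradient is affine in $F_{\hat P_m}$ combined with $\expect_{\batch}[F_{\hat P_m}(x)] = F_P(x)$ and a Fubini/dominated-convergence justification for the interchanges (the paper secures this via the finite-mean hypothesis $\expect_{x\sim P}[x] = \int_0^\infty(1-F_P)\,\dx - \int_{-\infty}^0 F_P\,\dx$). For (I), however, the paper works in the dual: it uses the IPM representation $\ldive_p(P,Q) = \sup_{f\in\bF_q}|\expect_P f - \expect_Q f|$, applies Jensen to pull $\expects_A$ outside the supremum, and exploits that $\bF_q$ is closed under translation; you instead stay in the primal, writing $F_{A+X}(x)=\expect_A[F_X(x-A)]$ and applying Jensen to $t\mapsto|t|^p$ pointwise before invoking Tonelli and translation invariance of Lebesgue measure. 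Your route is more elementary in that it does not rely on the dual characterization (which the paper imports from the literature), at the cost of not illustrating the IPM machinery the paper uses elsewhere. For the uniqueness of $p=2$, the paper argues abstractly: it constructs $P$, $Q_\theta$ with non-overlapping supports so that $F_{Q_\theta}\ge F_{\hat P_m}$ everywhere, and applies strict Jensen to $\phi_p(z)=z^{p-1}$ (convex for $p>2$, concave for $1<p<2$) to get a strict one-sided bias; your Bernoulli computation is a concrete instance of the same phenomenon, and the closed form $p\,\theta^*(1-\theta^*)\bigl[(\theta^*)^{p-2}-(1-\theta^*)^{p-2}\bigr]$ for the bias at $\theta=\theta^*$ (with the true gradient vanishing there for $p>1$) is a cleaner and fully explicit certificate; your deferral of $p=1$ to the Wasserstein result matches the paper. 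The only gap relative to the paper's (already hand-waved) treatment is that you do not address the $p=\infty$ endpoint of the uniqueness claim, which the paper dispatches by a limit argument.
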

We conclude that the Cram\'er distance enjoys both the benefits of the Wasserstein metric and the SGD-friendliness of the KL divergence. Given the close similarity of the Wasserstein and $l_p$ metrics, it is truly remarkable that only the Cram\'er distance has unbiased sample gradients.

The \emph{energy distance} \citep{szekely02estatistics} is a natural extension of the Cram{\'e}r distance to the multivariate case. Let $P, Q$ be probability distributions over $\bR^d$ and let $X, X'$ and $Y, Y'$ be independent random variables distributed according to $P$ and $Q$, respectively. The energy distance \citep[sometimes called the squared energy distance, see e.g.][]{rizzo2016energy} is
\begin{equation}
\cE(P, Q) := \cE(X, Y) := 2 \expect \norm{X - Y}_2 - \expect \norm{X - X'}_2 - \expect \norm{Y - Y'}_2. 
\label{eqn:energy_distance}
\end{equation}
\citeauthor{szekely02estatistics} showed that, in the univariate case, $l_2^2(P, Q) = \tfrac{1}{2} \cE(P, Q)$. Interestingly enough, the energy distance can also be written in terms of a difference of expectations. For
\begin{equation*}
f^*(x) := \expect \norm{x - Y'}_2 - \expect \norm{x - X'}_2,
\end{equation*}
we find that
\begin{equation}
\cE(X, Y) = \expect f^*(X) - \expect f^*(Y).\label{eqn:energy_distance_dual}
\end{equation}
Note that this $f^*$ is \emph{not} the maximizer of the dual \eqnref{cramer_ipm}, since $\frac{1}{2} \cE$ is equal to the \emph{squared} $l_2$ metric (i.e. the Cram\'er distance).\footnote{The maximizer of \eqnref{cramer_ipm} is: $g^*(x) = \frac{f^*(x)}{\sqrt{2 (\expect f^*(X) - \expect f^*(Y))}}$ \citep[based on results by][]{gretton2012kernel}. }
Finally, we remark that $\cE$ also possesses properties (I), (S), and (U) (proof in the appendix).

\subsection{Comparison to the 1-Wasserstein Metric}\label{sec:impact_wasserstein_bias}

To illustrate how the Cram\'er distance compares to the 1-Wasserstein metric, we consider modelling the discrete distribution $P$ depicted in Figure \ref{fig:dist_comparison_toy_example} (left). Since the trade-offs between metrics are only apparent when using an approximate model, we use an underparametrized discrete distribution $\Qtheta$ which assigns the same probability to $x = 1$ and $x = 10$. That is,
\begin{equation*}
\Qtheta(0) := \Qtheta\{x = 0\} = \frac{1}{1 + 2 e^\theta} \qquad \Qtheta(1) = \Qtheta(10) = \frac{e^{\theta}}{1 + 2 e^{\theta}} .
\end{equation*}
Figure \ref{fig:dist_comparison_toy_example} depicts the distributions minimizing the various divergences under this parametrization. In particular, the Cram\'er solution is relatively close to the 1-Wasserstein solution. Furthermore, the minimizer of the sample Wasserstein loss ($m = 1$) clearly provides a bad solution (most of the mass is on 0). Note that, as implied by Theorem \ref{thm:wasserstein_bias}, the bias shown here would arise even if the distribution could be exactly represented.

\begin{figure*}
\begin{center}
\includegraphics[width=\textwidth]{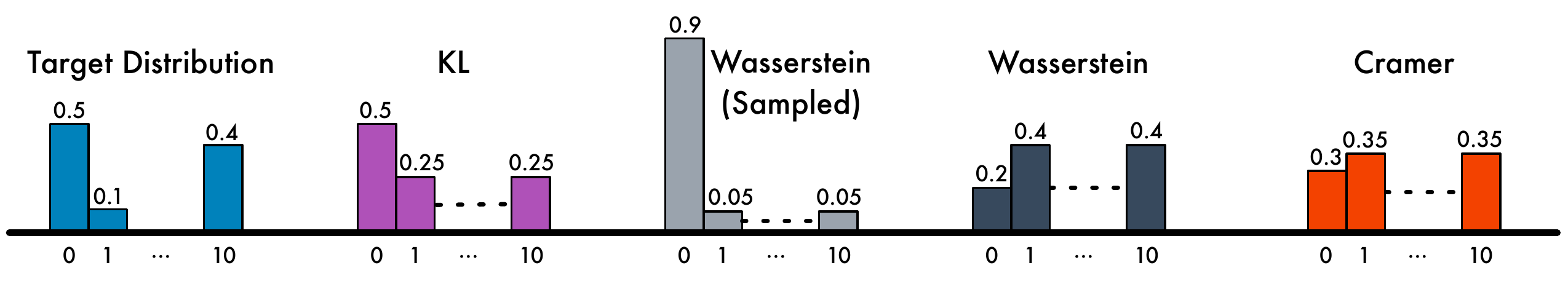}
\end{center}
\caption{\textbf{Leftmost.} Target distribution. One outcome ($10$) is significantly more distant than the two others ($0$, $1$). \textbf{Rest.}  Distributions minimizing the divergences discussed in this paper, under the constraint $Q(1) = Q(10)$. Both Wasserstein metric and Cram\'er distance underemphasize $Q(0)$ to better match the cumulative distribution function. The sample Wasserstein loss result is for $m=1$. \label{fig:dist_comparison_toy_example}}
\end{figure*}

To further show the impact of the Wasserstein bias we used gradient descent to minimize either the true or sample losses with a fixed step-size ($\alpha = 0.001$). In the stochastic setting, at each step we construct the empirical distribution $\hat P_m$ from $m$ samples (a Dirac when $m=1$), and take a gradient step. As we are interested in finding a solution that accounts for the numerical values of outcomes, we measure the performance of each method in terms of the true 1-Wasserstein loss.

Figure \ref{fig:ordinalregression} (left) plots the resulting training curves in the 1-Wasserstein regime, with the KL and Cram\'er solutions indicated for reference (for completeness, full learning curves are provided in the appendix). We first note that, compared to the KL solution, the Cram\'er solution has significantly smaller Wasserstein distance to the target distribution. Second, for small sample sizes stochastic gradient descent fails to find reasonable solutions, and for $m=1$ even converges to a solution worse than the KL minimizer.
This small experiment highlights the cost incurred from minimizing the sample Wasserstein loss, and shows that increasing the sample size may not be sufficient to guarantee good behaviour.

\begin{figure}%
\begin{center}
\includegraphics[width=1\textwidth]{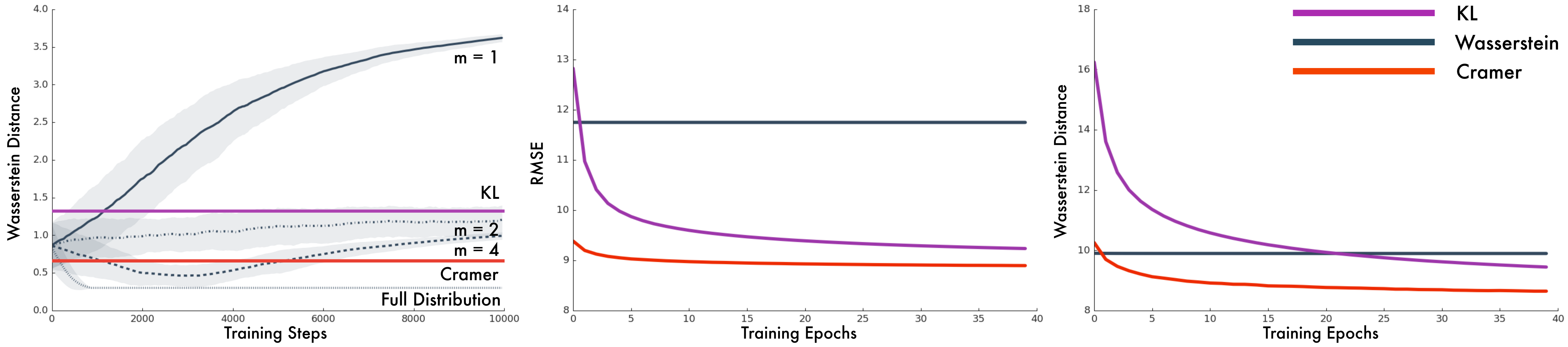}
\caption{\textbf{Left.} Wasserstein distance in terms of SGD updates, minimizing the true or sample Wasserstein losses. Also shown are the distances for the KL and Cram\'er solutions. Results are averaged over $10$ random initializations, with error-bands indicating one standard deviation. \textbf{Center.} Ordinal regression on the Year Prediction MSD dataset. Learning curves report RMSE on test set. \textbf{Right.} The same in terms of sample Wasserstein loss.}
\label{fig:ordinalregression}
\end{center}
\end{figure}

\subsection*{Ordinal Regression}

We next trained a neural network in an ordinal regression task using either of the three divergences. The task we consider is the Year Prediction MSD dataset \citep{ucirepository}. In this task, the model must predict the year a song was written (from 1922 to 2011) given a 90-dimensional feature representation. In our setting, this prediction takes the form of a probability distribution. We measure each method's performance on the test set (Figure~\ref{fig:ordinalregression}) in two ways: root mean squared error (RMSE) -- the metric minimized by \citet{PB} -- and the sample Wasserstein loss. Full details on the experiment may be found in the appendix.

The results show that minimizing the sample Wasserstein loss results in significantly worse performance. By contrast, minimizing the Cram\'er distance yields the lowest RMSE and Wasserstein loss, confirming the practical importance of having unbiased sample gradients. 
Naturally, minimizing for one loss trades off performance with respect to the others, and minimizing the Cram\'er distance results in slightly higher negative log likelihood than when minimizing the KL divergence (Figure \ref{fig:ordinalregression_batch_size} in appendix). We conclude that, in the context of ordinal regression where outcome similarity plays an important role, the Cram\'er distance should be preferred over either KL or the Wasserstein metric.

\section{Cram\'er GAN}\label{sec:cramer_gan}

We now consider the Generative Adversarial Networks (GAN) framework \citep{goodfellow14generative}, in particular issues arising in the Wasserstein GAN \citep{arjovsky17wasserstein},
and propose a better GAN based on the Cram\'er distance. A GAN is composed of a generative model (in our experiments, over images), called the \emph{generator}, and a trainable loss function called a discriminator or \emph{critic}.
In theory, the Wasserstein GAN algorithm requires training the critic until convergence, but this is rarely achievable: we would require a critic that is a very powerful network to approximate the Wasserstein distance well. Simultaneously, training this critic to convergence  would overfit the empirical distribution of the training set, which is undesirable.

Our proposed loss function allows for useful learning with imperfect critics by combining the energy distance with a transformation function $h : \bR^d \to \bR^k$, where $d$ is the input dimensionality and $k = 256$ in our experiments.
The generator then seeks to minimize the energy distance of the transformed variables $\cE(h(X), h(Y))$, where $X$ is a real sample and $Y$ is a generated sample. The critic itself seeks to maximize this same distance by changing the parameters of $h$, subject to a soft constraint \citep[the gradient penalty used by][]{gulrajani2017improved}.
The losses used by the Cram\'er GAN algorithm is summarized in Algorithm~1, with additional design choices detailed in the appendix.

\begin{figure*}
\begin{center}
\begin{minipage}{0.47\textwidth}
\includegraphics[width=\linewidth]{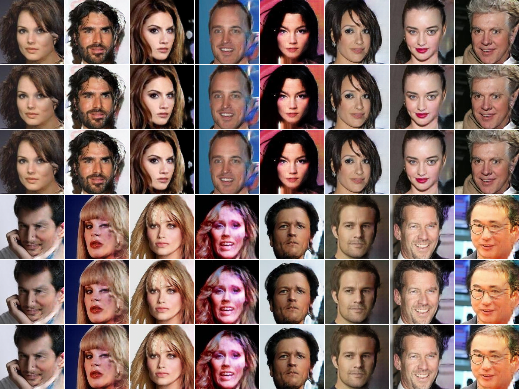}
\end{minipage}
\hskip 0.5cm
\begin{minipage}{0.47\textwidth}
\includegraphics[width=\linewidth]{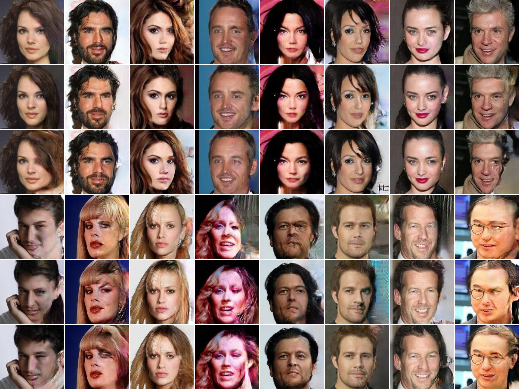}
\end{minipage}
\end{center}
  \caption{Generated right halves of the faces for WGAN-GP (left) and Cram\'er GAN (right). The given left halves are from CelebA 64x64 validation set \citep{liu2015faceattributes}.}
\label{fig:cond_celeba64}
\end{figure*}

\begin{figure*}
\begin{minipage}{0.60\textwidth}
\begin{algorithmic}
  {\hrule height.8pt depth0pt \kern2pt}
  \STATE {\bfseries Algorithm 1:} Cram\'er GAN Losses.
  \vspace{0.1cm}
  {\hrule height.8pt depth0pt \kern2pt}
  \STATE {\bfseries Defaults:} The gradient penalty coefficient $\lambda=10$.
   \STATE Sample $x_r \sim P$ a real sample.
   \STATE Sample $x_g, x'_g \sim Q$ two independent generator samples.
   \STATE Sample $\epsilon \sim \mathrm{Uniform}(0, 1)$ a random number.
   \STATE Interpolate real and generated samples:
   \STATE $\hat{x} = \epsilon x_r + (1 - \epsilon) x_g$
   \STATE Define the critic:
   \STATE $f(x) = \norm{h(x) - h(x'_g)}_2 - \norm{h(x)}_2$
   \STATE Compute the generator loss:
   \STATE $L_\mathit{g} = \norm{h(x_r) - h(x_g)}_2 + \| h(x_r) - h(x'_g) \|_2$
   \STATE $\qquad \quad - \| h(x_g) - h(x'_g) \|_2$
   \STATE Compute the surrogate generator loss:
   \STATE $L_\mathit{surrogate} = f(x_r) - f(x_g)$
   \STATE Compute the critic loss:
   \STATE $L_\mathit{critic} = -L_\mathit{surrogate} + \lambda (\|\grad_{\hat{x}} f(\hat{x})\|_2 - 1)^2$
\end{algorithmic}
\vspace{0.1cm}
{\hrule height.8pt depth0pt \kern2pt}
\end{minipage}
\hskip 0.05cm
\begin{minipage}{0.38\textwidth}
\includegraphics[width=\linewidth]{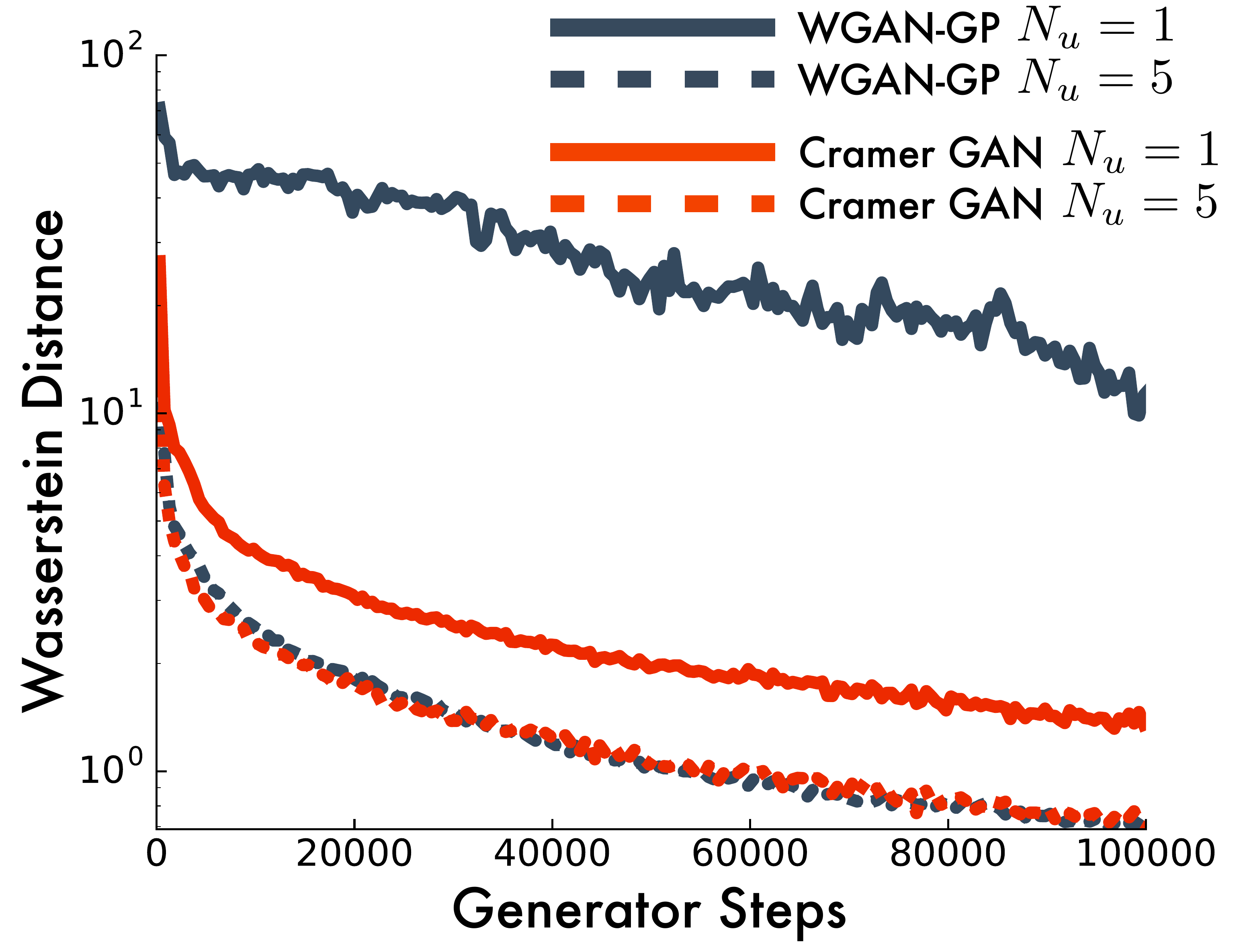}
\vspace{-0.5cm}
  \caption{Approximate Wasserstein distances between CelebA test set and each generator. $N_u$ indicates the number critic updates per generator update.}
\label{fig:plot_cond_edgan_celeba64_wdistance}
\end{minipage}
\end{figure*}

\subsection{Cram\'er GAN Experiments}
We now show that, compared to the improved Wasserstein GAN (WGAN-GP) of \citet{gulrajani2017improved}, the Cram\'er GAN leads to more stable learning and increased diversity in the generated samples. In both cases we train generative models that predict the right half of an image given the left half; samples from unconditional models are provided in the appendix (Figure~\ref{fig:uncond_celeba64}). The dataset we use here is the CelebA $64\times64$ dataset \citep{liu2015faceattributes} of celebrity faces.

\textbf{Increased diversity.} In our first experiment, we compare the qualitative diversity of completed faces by showing three sample completions generated by either model given the left half of a validation set image (Figure \ref{fig:cond_celeba64}). We observe that the completions produced by WGAN-GP are almost deterministic. Our findings echo those of \citet{isola2016image}, who observed that \emph{``the generator simply learned to ignore the noise.''} By contrast, the completions produced by Cram\'er GAN are fairly diverse, including different hairstyles, accessories, and backgrounds. This lack of diversity in WGAN-GP is particularly concerning given that the main requirement of a generative model is that it should provide a variety of outputs.
  
Theorem \ref{thm:wasserstein_bias} provides a clue as to what may be happening here. We know that minimizing the sample Wasserstein loss will find the wrong minimum. In particular, when the target distribution has low entropy, the sample Wasserstein minimizer may actually be a deterministic distribution. But a good generative model of images \emph{must} lie in this ``almost deterministic'' regime, since the space of natural images makes up but a fraction of all possible pixel combinations and hence there is little per-pixel entropy. We hypothesize that the increased diversity in the Cram\'er GAN comes exactly from learning these almost deterministic predictions.

\textbf{More stable learning.}
In a second experiment, we varied the number of critic updates ($N_u$) per generator update.
To compare performance between the two architectures, we measured the loss computed by an independent WGAN-GP critic trained on the validation set, following a similar evaluation previously done by \citet{ivogan}.
Figure~\ref{fig:plot_cond_edgan_celeba64_wdistance} shows the independent Wasserstein critic distance between each generator and the test set during the course of training. Echoing our results with the toy experiment and ordinal regression, the plot shows that when a single critic update is used, WGAN-GP performs particularly poorly. We note that additional critic updates also improve Cram\'er GAN. This indicates that it is helpful to keep adapting the $h(x)$ transformation.

\section{Conclusion}
\label{sect:conclusion}
There are many situations in which the KL divergence, which is commonly used as a loss function in machine learning, is not suitable. The desirable alternatives, as we have explored, are the divergences that are ideal and allow for unbiased estimators: they allow geometric information to be incorporated into the optimization problem; because they are scale-sensitive and sum-invariant, they possess the convergence properties we require for efficient learning; and the correctness of their sample gradients means we can deploy them in large-scale optimization problems. Among open questions, we mention deriving an unbiased estimator that minimizes the Wasserstein distance, and variance analysis and reduction of the Cram\'er distance gradient estimate.

\bibliography{distributional_l2}
\bibliographystyle{apalike}

\clearpage

\newpage
\appendix

\section{Proofs}

\subsection{Properties of a Divergence}

\begin{proof}[Proof (Proposition \ref{prop:kl_prop} and \ref{prop:wasserstein_prop})]
The statement regarding (U) for the KL divergence is well-known, and forms the basis of most stochastic gradient algorithms for classification. \citet{chung87discounted} have shown that the total variation does not have property (S); by Pinsker's inequality, it follows that the same holds for the KL divergence. A proof of (I) and (S) for the Wasserstein metric is given by \citet{bickel81asymptotic}, while the lack of (U) is shown in the proof of Theorem~\ref{thm:wasserstein_bias}.
\end{proof}

\subsection{Biased estimate}

\begin{proof}[Proof (Theorem \ref{thm:wasserstein_bias}).]
{\bf Minimax bias:} 
Consider $P={\cal B}(\theta^*)$, a Bernoulli distribution of parameter $\theta^*$ and $Q_\theta={\cal B}(\theta)$ a Bernoulli of parameter $\theta$. The empirical distribution $\hat P_m$ is a Bernoulli with parameter $\hat \theta := \tfrac{1}{m} \sum\nolimits_{i=1}^m X_i$. 
Note that with $P$ and $\Qtheta$ both Bernoulli distributions, the $p^{th}$ powers of the $p$-Wasserstein metrics are equal, i.e. $w_1(P, \Qtheta) = w^p_p(P, \Qtheta)$. This gives us an easy way to prove the stronger result that all $p$-Wasserstein metrics have biased sample gradients.
The gradient of the loss $w^p_p(P,Q_\theta)$ is, for $\theta\neq\theta^*$, 
\begin{equation*}
g := \grad w^p_p(P, Q_\theta) = \nabla \Big[ \big| \theta^*-\theta\big| \Big] = \sgn(\theta - \theta^*),
\end{equation*}
and similarly, the gradient of the sample loss is, for $\theta\neq\hat \theta$, 
$$
\hat g := \grad w_p^p(\hat P_m, Q_\theta) = \nabla \Big[ \big| \hat \theta-\theta\big| \Big] = \sgn(\theta - \hat \theta).
$$
Notice that this estimate is biased for any $m\geq 1$ since
\begin{equation}\label{eqn:expected_sample_wasserstein_gradient}
\expect\hat g = 2 \Pr \{ \hat \theta < \theta \} - 1,
\end{equation}
which is different from $g$ for any $\theta^*\in (0,1)$. In particular for $m=1$, $\expects_P \hat g = 1 - 2 \theta^*$ does not depend on $\theta$, thus a gradient descent using a one-sample gradient estimate has no chance of minimizing the Wasserstein loss as it will converge to either $1$ or $0$ instead of $\theta^*$.

Now observe that for $m\geq 2$, and any $\theta > \tfrac{m - 1}{m}$,
\begin{equation*}
\Pr \{ \hat \theta < \theta \} = \Pr \{ \exists i \text{ s.t. } X_i = 0 \} = 1 - (\theta^*)^m,
\end{equation*}
and therefore
\begin{equation*}
\expect\hat g = 1 - 2 (\theta^*)^m.
\end{equation*}
Taking $\theta^* = \tfrac{m-1}{m}$, we find that
\begin{equation*}
g - \expect \hat g = 1 - \left [ 1 - 2(\theta^*)^m \right ] = 2\left(1 - \frac{1}{m}\right )^{m} \ge 2e^{-2} .
\end{equation*}
Thus for any $m$, there exists $P={\cal B}(\theta^*)$ and $Q_\theta={\cal B}(\theta)$ with $\theta^*=\frac{m-1}{m}<\theta<1$ such that the bias $g - \expect \hat g$ is lower-bounded by a numerical constant.  Thus the {\em minimax bias} does not vanish with the number of samples $m$.

Notice that a similar argument holds for $\theta^*$ and $\theta$ being close to $0$. In both situations where $\theta^*$ is close to $0$ or $1$, the bias is non vanishing when $|\theta^*-\theta|$ is of order $\tfrac{1}{m}$. However this is even worse when $\theta^*$ is away from the boundaries. For example chosing $\theta^*=\tfrac{1}{2}$, we can prove that the bias is non vanishing even when $|\theta^*-\theta|$ is (only) of order $\tfrac{1}{\sqrt{m}}$.

Indeed, using the anti-concentration result of \cite{Veraar2010} (Proposition 2), we have that for a sequence $Y_1,\dots,Y_m$ of Rademacher random variables (i.e.~$+/-1$ with equal probability),
$$\Pr\Big(\frac 1n\sum_{i=1}^m Y_i \geq \epsilon\Big) \geq (1-m\epsilon^2)^2/3.$$

This means that for samples $X_1,\dots,X_m$ drawn from a Bernoulli ${\cal B}(\theta^*=\tfrac{1}{2})$ (i.e., $Y_i=2X_i-1$ are Rademacher), we have
$$\Pr\Big(\hat \theta \geq \theta^* + \epsilon/2\Big) \geq (1-m\epsilon^2)^2/3,$$
thus for $1/2 = \theta^*<\theta<\theta^*+1/\sqrt{8m}$ we have the following lower bound on the bias:
$$g - \expect \hat g = 2 \Pr \big( \hat \theta \geq \theta\big) \geq 1/6.$$

Thus the bias is lower-bounded by a constant (independent of $m$) when $\theta^*=\tfrac{1}{2}$ and $|\theta^*-\theta|=O(1/\sqrt{m})$. 

{\bf Wrong minimum:}
From \eqnref{expected_sample_wasserstein_gradient}, we deduce that a stochastic gradient descent algorithm based on the sample Wasserstein gradient will converge to a $\tilde \theta$ such that $\Pr\{\hat \theta < \tilde \theta\}=\tfrac{1}{2}$, i.e., $\tilde\theta$ is the median of the distribution over $\hat \theta$, whereas $\theta^*$ is the mean of that distribution. Since $\hat \theta$ follows a (normalized) binomial distribution with parameters $m$ and $\theta^*$, we know that the median $\tilde\theta$ and the mean $\theta^*$ do not necessarily coincide, and can actually be as far as $\frac{1}{2m}$-away from each other. For example for any odd $m$ and any $\theta^*\in \big(\tfrac{1}{2}, \tfrac{1}{2} - \tfrac{1}{2m} \big)$ the median is $\theta^*-\tfrac{1}{2m}$.

It follows that the minimum of the expected sample Wasserstein loss (the fixed point of the stochastic gradient descent using the sample Wasserstein gradient) is different from the minimum of the true Wasserstein loss:
\begin{equation*}
\argmin_{\theta} \expects [ w^p_p(\hat P_m, Q_{\theta}) ] \neq \argmin_{\theta} [w^p_p(P, Q_{\theta})].
\end{equation*}

This is illustrated in Figure~\ref{fig:empirical.W.loss}.

\begin{figure*}
\begin{center}
\includegraphics[width=\linewidth]{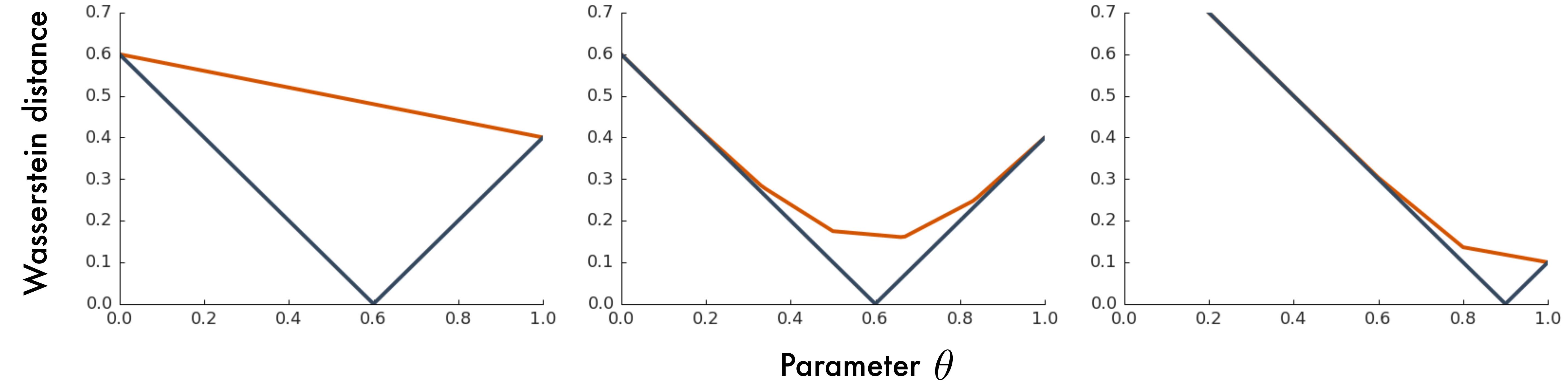}
\end{center}
  \caption{Wasserstein loss (black curve) $\theta\mapsto |\theta^*-\theta|$ versus expected sample Wasserstein loss (red curve) $\theta\mapsto \E[|\hat \theta - \theta|]$, for different values of $m$ and $\theta^*$ and $p=1$. {\bf Left:} $m=1$, $\theta^*=0.6$. A stochastic gradient using a one-sample Wasserstein gradient estimate will converge to $1$ instead of $\theta^*$. {\bf Middle:} $m=6$, $\theta^*=0.6$. The minimum of the expected sample Wasserstein loss is the median of $\hat \theta$ which is here $\tilde\theta=\tfrac{2}{3}\neq\theta^*=0.6$. {\bf Right:} $m=5$, $p=0.9$. The minimum of the expected sample Wasserstein is $\tilde\theta=1$ and not $\theta^*=0.9$.}
\label{fig:empirical.W.loss}
\end{figure*}

Notice that the fact that the minima of these losses differ is worrisome as it means that minimizing the sample Wasserstein loss using (finite) samples will not converge to the correct solution. 

\textbf{Deterministic solutions:} Consider the specific case where $(1/2)^{1/n} < \theta^* <1$ (illustrated in the right plot of Figure~\ref{fig:empirical.W.loss}). Then the expected sample gradient $\nabla  \expects [ w^p_p(\hat P_m, Q_{\theta^*}) ] = \E \hat g = 1- 2(\theta^*)^n < 0$ for any $\theta$, so a gradient descent algorithm will converge to $1$ instead of $\theta^*$. Notice that a symmetric argument applies for $\theta^*$ close to $0$. 

In this simple example, minimizing the sample Wasserstein loss may lead to degenerate solutions (i.e., deterministic) when our target distributions have low (but not zero) entropy.
\end{proof}

\subsection{Consistency of the sample $1$-Wasserstein gradient}

We provide an additional result here showing that the sample $1$-Wasserstein gradient converges to the true gradient as $m \to \infty$.

\begin{thm}\label{thm:consistency}
Let $P$ and $Q_\theta$ be probability distributions, with $Q_\theta$ parametrized by $\theta$. Assume that the set $\big\{ x\in X, \mbox{ such that } F_{P}(x)=F_{Q_\theta}(x) \big\}$ has measure zero, and that for any $x\in X$, the map $\tilde \theta\mapsto F_{Q_{\tilde \theta}}(x)$ is differentiable in a neighborhood ${\cal V}(\theta)$ of $\theta$ with a uniformly bounded derivative (for $\tilde\theta\in {\cal V}(\theta)$ and $x\in X$).
Let $\hat P_m=\frac 1m \sum_{i=1}^m \delta_{X_i}$ be the empirical distribution derived from $m$ independent samples $X_1,\dots, X_m$ drawn from $P$. Then
\begin{equation*}
\lim_{m \to \infty} \grad w_1(\hat P_m, Q_\theta) = \grad w_1(P, Q_\theta), \mbox{ almost surely.} 
\end{equation*}
\end{thm}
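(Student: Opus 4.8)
The plan is to work from the $l_1$ representation of the $1$-Wasserstein metric, $w_1(P, Q_\theta) = \int_{-\infty}^\infty |F_P(x) - F_{Q_\theta}(x)| \, dx$ (recall that the $\ldive_p$ and Wasserstein metrics coincide at $p = 1$), and to differentiate under the integral sign. Because the absolute value is differentiable away from zero and the hypothesis guarantees that $\{x : F_P(x) = F_{Q_\theta}(x)\}$ has measure zero, the map $\tilde\theta \mapsto |F_P(x) - F_{Q_{\tilde\theta}}(x)|$ is differentiable at $\theta$ for almost every $x$, with derivative $\sgn(F_{Q_\theta}(x) - F_P(x)) \, \grad_\theta F_{Q_\theta}(x)$. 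Together with the assumed uniform bound on $\grad_\theta F_{Q_{\tilde\theta}}(x)$ over $\tilde\theta \in \mathcal{V}(\theta)$, this would let me pass the gradient inside the integral to get
\begin{equation*}
\grad_\theta w_1(P, Q_\theta) = \int_{-\infty}^\infty \sgn(F_{Q_\theta}(x) - F_P(x)) \, \grad_\theta F_{Q_\theta}(x) \, dx,
\end{equation*}
and the identical expression with $F_P$ replaced by the empirical CDF $F_{\hat P_m}$ for the sample gradient (whose equality set is also null, since $F_{Q_\theta}$ is continuous).

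Next I would control the difference of the two gradients,
\begin{equation*}
\grad_\theta w_1(\hat P_m, Q_\theta) - \grad_\theta w_1(P, Q_\theta) = \int_{-\infty}^\infty \big[ \sgn(F_{Q_\theta}(x) - F_{\hat P_m}(x)) - \sgn(F_{Q_\theta}(x) - F_P(x)) \big] \grad_\theta F_{Q_\theta}(x) \, dx.
\end{equation*}
The bracketed factor is nonzero only at those $x$ where $F_{Q_\theta}(x)$ lies between $F_P(x)$ and $F_{\hat P_m}(x)$. The key probabilistic input is the Glivenko--Cantelli theorem, giving $\epsilon_m := \sup_x |F_{\hat P_m}(x) - F_P(x)| \to 0$ almost surely. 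On this full-probability event the integrand is supported on the set $A_m := \{ x : |F_{Q_\theta}(x) - F_P(x)| \le \epsilon_m \}$, so the difference is bounded in absolute value by $2 \int_{A_m} |\grad_\theta F_{Q_\theta}(x)| \, dx$.

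Finally I would argue that this bound vanishes. Replacing $\epsilon_m$ by its monotone majorant $\sup_{k \ge m} \epsilon_k$, the sets $A_m$ decrease to $A_0 := \{x : F_{Q_\theta}(x) = F_P(x)\}$, which is Lebesgue-null by hypothesis, so continuity of measure (equivalently dominated convergence against the envelope $|\grad_\theta F_{Q_\theta}|$) forces $\int_{A_m} |\grad_\theta F_{Q_\theta}(x)| \, dx \to 0$ almost surely, completing the argument. I expect the main obstacle to be exactly this domination step: the uniform pointwise bound on the derivative does not on its own produce an integrable envelope over an unbounded domain, so I would have to lean on the standing assumption that the integrals in play are finite (or that the effective domain $X$ has finite measure) to license the dominated-convergence passage. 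A secondary subtlety is justifying the differentiation under the integral sign despite the non-smooth absolute value --- which is precisely what the measure-zero condition on the equality set is there to handle.
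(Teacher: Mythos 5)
Your proposal is essentially correct and shares its first half with the paper's proof: both pass the gradient through the integral in the $l_1$ representation to obtain $\grad_\theta w_1(\cdot, Q_\theta) = \int \sgn(F_{Q_\theta}-F_\cdot)\,\grad_\theta F_{Q_\theta}\,\dx$ for the true and empirical losses. Where you diverge is the convergence step. The paper argues pointwise: by the strong law of large numbers $F_{\hat P_m}(x) \to F_P(x)$ for each $x$, hence $\sgn(F_{\hat P_m}(x)-F_{Q_\theta}(x)) \to \sgn(F_P(x)-F_{Q_\theta}(x))$ off the null equality set, and dominated convergence (with envelope $|\grad_\theta F_{Q_\theta}|$) finishes; the set $\Omega_m = \{x: F_{\hat P_m}(x)=F_{Q_\theta}(x)\}$ where the sample gradient formula breaks down is split off separately and its contribution bounded by $M|\Omega_m|$. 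You instead invoke the uniform Glivenko--Cantelli bound $\epsilon_m$ to localize the entire discrepancy to the shrinking sets $A_m = \{x: |F_{Q_\theta}(x)-F_P(x)|\le \epsilon_m\}$ and conclude by continuity of measure as $A_m \downarrow \{F_{Q_\theta}=F_P\}$. Your route is more quantitative and arguably cleaner --- it gives an explicit bound $2\int_{A_m}|\grad_\theta F_{Q_\theta}|$ on the gradient error rather than a bare limit --- at the price of a stronger (though free) probabilistic input. Both proofs share the same unstated integrability requirement on $\grad_\theta F_{Q_\theta}$ over an unbounded domain, which you flag honestly and the paper does not. The one genuinely too-quick step in your write-up is the parenthetical claim that $\{x: F_{\hat P_m}(x)=F_{Q_\theta}(x)\}$ is null ``since $F_{Q_\theta}$ is continuous'': continuity of $F_{Q_\theta}$ in $x$ is not assumed, and even granting it, $F_{\hat P_m}$ takes only the values $k/m$, so a flat stretch of $F_{Q_\theta}$ at such a level produces a positive-measure equality set. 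You should handle this as the paper does --- carry the set along as a correction term bounded by $M$ times its measure --- and note that this set is contained in your $A_m$ (on it, $|F_{Q_\theta}-F_P| = |F_{\hat P_m}-F_P| \le \epsilon_m$), so your own shrinking-sets argument absorbs it without further work.
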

We note that the measure requirement is strictly to keep the proof simple, and does not subtract from the generality of the result.
\begin{proof}
Let $\grad := \gradtheta$. Since $p=1$ the Wasserstein distance $w_1(P,Q)$ measures the area between the curves defined by the distribution function of $P$ and $Q$, thus
$w_1(P,Q) = l_1(P,Q) = \int \big| F_{P}(x)-F_{Q}(x)\big| dx$ and
\beqan
\nabla w_1(P,Q_\theta)&=&\lim_{\Delta\rightarrow 0} \frac{w_1(P,Q_{\theta+\Delta})- w_1(P,Q_{\theta})}{\Delta}\\
&=& \lim_{\Delta\rightarrow 0} \int \frac 1{\Delta} \Big( \big| F_{P}(x)-F_{Q_{\theta+\Delta}}(x)\big| - \big| F_{P}(x)-F_{Q_\theta}(x)\big| \Big) dx.
\eeqan

Now since we have assumed that for any $x\in X$, the map $\theta\mapsto F_{Q_{\theta}}(x)$ is differentiable in a neighborhood ${\cal V}(\theta)$ of $\theta$ and its derivative is uniformly (over ${\cal V}(\theta)$ and $x$) bounded by $M$, we have
\beqan
\frac 1{\Delta} \Big| \big| F_{P}(x)-F_{Q_{\theta+\Delta}}(x)\big| - \big| F_{P}(x)-F_{Q_\theta}(x)\big| \Big|
&\leq & \frac 1{\Delta} \big| F_{Q_{\theta+\Delta}}(x) - F_{Q_\theta}(x)\big| \leq M.
\eeqan
Thus the dominated convergence theorem applies and
\beqan
\nabla w_1(P,Q_\theta)&=&\int \lim_{\Delta\rightarrow 0} \frac 1{\Delta} \Big( \big| F_{P}(x)-F_{Q_{\theta+\Delta}}(x)\big| - \big| F_{P}(x)-F_{Q_\theta}(x)\big| \Big) dx\\
&=& \int \nabla \big| F_{P}(x)-F_{Q_\theta}(x)\big| dx \\
&=& \int \sgn\big( F_{P}(x)-F_{Q_\theta}(x)\big) \nabla F_{Q_\theta}(x) dx,
\eeqan
since we have assumed that the set of $x\in X$ such that $F_{P}(x)=F_{Q_\theta}(x)$ has measure zero.

Now, using the same argument for $w_1(\hat P_m,Q_\theta)$ we deduce that
\beqan
\nabla w_1(\hat P_m,Q_\theta)&=&\int \underbrace{ \lim_{\Delta\rightarrow 0} \frac 1{\Delta} \Big( \big| F_{\hat P_m}(x)-F_{Q_{\theta+\Delta}}(x)\big| - \big| F_{\hat P_m}(x)-F_{Q_\theta}(x)\big| \Big)}_{A(x)} dx.
\eeqan
Let us decompose this integral over $X$ as the sum of two integrals, one over $X\setminus \Omega_m$ and the other one over $\Omega_m$, where $\Omega_m = \big\{x\in X, F_{\hat P_m}(x)=F_{Q_\theta}(x)\big\}$. We have
$$\int_{X\setminus \Omega_m} A(x) dx = \int_{X\setminus \Omega_m} \sgn\big( F_{\hat P_m}(x)-F_{Q_\theta}(x)\big) \nabla F_{Q_\theta}(x) dx,$$
and
\beqan
\Big| \int_{\Omega_m} A(x) dx  \Big| 
&\leq & \int_{\Omega_m} \lim_{\Delta\rightarrow 0} \frac 1{\Delta} \Big( \big| F_{Q_{\theta+\Delta}}(x)-F_{Q_\theta}(x)\big| \Big) dx \\
&\leq & M |\Omega_m |.
\eeqan

Now from the strong law of large numbers, we have that for any $x$, the empirical cumulative distribution function $F_{\hat P_m}(x)$ converges to the cumulative distribution $F_{P}(x)$ almost surely. We deduce that $\Omega_m$ converges to the set $\big\{x, F_{P}(x)=F_{Q_\theta}(x)\big\}$ which has measure zero, thus $|\Omega_m |\rightarrow 0$ and
\beqan
\lim_{m\rightarrow\infty} \nabla w_1(\hat P_m,Q_\theta)&=&\lim_{m\rightarrow\infty} \int_{X} \sgn\big( F_{\hat P_m}(x)-F_{Q_\theta}(x)\big) \nabla F_{Q_\theta}(x) dx.
\eeqan
Now, since $|\nabla F_{Q_\theta}(x)|\leq M$, we can use once more the dominated convergence theorem to deduce that
\begin{align*}
\lim_{m\rightarrow\infty} \nabla w_1(\hat P_m,Q_\theta) &=
\int_{X} \lim_{m\rightarrow\infty} \sgn\big( F_{\hat P_m}(x)-F_{Q_\theta}(x)\big) \nabla F_{Q_\theta}(x) dx\\
&= \int_{X} \sgn\big( F_{P}(x)-F_{Q_\theta}(x)\big) \nabla F_{Q_\theta}(x) dx\\
&= \nabla w_1(P,Q_\theta). \qedhere
\end{align*}
\end{proof}

The following lemma will be useful in proving that the Cram\'er distance has property (U).
\begin{lem}\label{lem:expected_empirical_cdf}
Let $\batch := X_1, \dots, X_m$ be independent samples from $P$, and let $\hat P_m := \tfrac{1}{m} \sum\nolimits_i \delta_{X_i}$. Then
\begin{equation*}
\expect_{\batch \sim P} F_{\hat P_m}(x) = F_P(x) .
\end{equation*}
\end{lem}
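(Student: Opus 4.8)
The plan is to reduce the statement to linearity of expectation applied to a finite sum of indicator functions. The key observation is that the distribution function of a single Dirac mass $\delta_{X_i}$ evaluated at $x$ is simply the step function $\indic{X_i \le x}$, so the distribution function of the empirical mixture $\hat P_m = \tfrac{1}{m}\sum_{i=1}^m \delta_{X_i}$ decomposes as an average of such indicators. Everything then follows because the finitely many samples $X_1,\dots,X_m$ are each distributed according to $P$.

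First I would write $F_{\hat P_m}(x) = \tfrac{1}{m}\sum_{i=1}^m \indic{X_i \le x}$, justifying this directly from the definition $F_{\hat P_m}(x) = \hat P_m\big((-\infty, x]\big)$ together with $\hat P_m = \tfrac{1}{m}\sum_i \delta_{X_i}$. Next I would take the expectation $\expect_{\batch \sim P}$ of both sides and interchange it with the finite sum, which is unconditionally valid since the indicators are bounded by $1$ and there is no limit to justify. Finally I would evaluate each term using $\expect_{X_i \sim P} \indic{X_i \le x} = \Pr\{X_i \le x\} = F_P(x)$, which holds because each $X_i$ has distribution $P$; averaging the $m$ identical terms returns $F_P(x)$.

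There is no real obstacle here: the result is an elementary unbiasedness fact for the empirical distribution function, and the only points requiring any care are bookkeeping, namely fixing the convention that the Dirac CDF is right-continuous (so $\delta_{X_i}((-\infty,x]) = \indic{X_i \le x}$) and noting that boundedness of the indicators trivially licenses moving the expectation inside the finite sum. The value of the lemma is that it isolates precisely the linearity that will later let the gradient and expectation be exchanged when establishing property (U) for the Cram\'er distance.
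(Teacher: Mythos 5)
Your proof is correct and follows essentially the same route as the paper: decompose $F_{\hat P_m}(x)$ into the average of indicators $\indic{X_i \le x}$, swap the expectation with the finite sum, and evaluate each term as $F_P(x)$. If anything, your observation that only the marginal law of each $X_i$ is needed (not independence) is slightly cleaner than the paper's phrasing, which gratuitously invokes independence at the first step.
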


\begin{proof}
Because the $X_i$'s are independent,
\begin{align*}
F_{\hat P_m}(x) &= \int^x_{-\infty} \hat P_m(\dx) = \tfrac{1}{m} \sum_{i=1}^m \indic{X_i \le x} .
\end{align*}
Now, taking the expectation w.r.t. $\batch$,
\begin{align*}
\expect_{\batch \sim P} F_{\hat P_m}(x) &= \expect_{\batch \sim P} \tfrac{1}{m} \sum_{i=1}^m \indic{X_i \le x} \\
&= \tfrac{1}{m} \sum_{i=1}^m \expect_{X_i \sim P} \indic{X_i \le x} \\
&= \tfrac{1}{m} \sum_{i=1}^m \Pr \{ X_i \le x \} \\
&= F_P(x),
\end{align*}
since the $X_i$ are identically distributed according to $P$.
\end{proof}

\begin{proof}[Proof (Theorem \ref{thm:cramer_properties})]
We will prove that $\ldive_p$ has properties (I) and (S) for $p \in [1, \infty)$; the case $p = \infty$ follows by a similar argument. Begin by observing that
\begin{align*}
F_{cX}(x) &= Pr\{ cX \le x \} \\
&= Pr\left\{X \le \frac{x}{c}\right\} \\
&= F_X\left(\frac{x}{c}\right).
\end{align*}
Then we may rewrite $\ldive_p^p(cX, cY)$ as
\begin{align*}
\ldive_p^p(cX, cY) &= \int_{-\infty}^{\infty} \left | F_X\left(\frac{x}{c} \right) - F_Y \left(\frac{x}{c} \right) \right |^p \dx \\
&\overset{(a)}{=}  c \int_{-\infty}^{\infty} \Big | F_X\left(z \right) - F_Y \left(z \right) \Big |^p \textrm{d} z,
\end{align*}
where ($a$) uses a change of variables $z = x/c$. Taking both sides to the power $1/p$ proves that the $\ldive_p$ metric possesses property (S) of order $1/p$. For (I), we use the IPM formulation \eqnref{cramer_ipm}:
\begin{align*}
\ldive_p(A + X, A + Y) &= \sup_{f \in \cF_q} \Big | \expect_{A + X} f(x) - \expect_{A + Y} f(y) \Big | \\
&\overset{(a)}{=} \sup_{f \in \cF_q} \Big | \expects_A \expects_{X} f(x + a) - \expects_A \expects_{Y} f(y + a) \Big | \\
&= \sup_{f \in \cF_q} \Big | \expects_A \big [ \expects_{X} f(x + a) - \expects_{Y} f(y + a) \big] \Big | \\
&\overset{(b)}{\le} \expects_A \sup_{f \in \cF_q} \Big | \expects_{X} f(x + a) - \expects_{Y} f(y + a) \Big |,
\end{align*}
where ($a$) is by independence of $A$ and $X$, $Y$, and ($b$) is by Jensen's inequality. Next, recall that $\cF_q$ is the set of absolutely continuous functions whose derivative has bounded $L_q$ norm. Hence if $f \in \cF_q$, then also for all $a$ the translate $g_a(x) := f(x + a)$ is also in $\cF_q$. Therefore,
\begin{align*}
\ldive_p(A + X, A + Y) &\le \expects_A \sup_{f \in \cF_q} \Big | \expects_{X} f(x + a) - \expects_{Y} f(y + a) \Big | \\
&= \expects_A \sup_{g \in \cF_q} \Big | \expects_{X} g(x) - \expects_{Y} g(y) \Big  | \\
&= \sup_{g \in \cF_q} \Big | \expects_{X} g(x) - \expects_{Y} g(y) \Big | \\
&= \ldive_p(X, Y). \qedhere
\end{align*}

Now, to prove \textbf{(U)}. Here we make use of the introductory requirement that ``all expectations under consideration are finite.'' Specifically, we require that the mean under $P$, $\expect_{x \sim P} [x]$, is well-defined and finite, and similarly for $\Qtheta$. In this case,
\begin{equation}
\expect_{x \sim P} [x] = \int_0^\infty (1 - F_P(x)) \dx \, - \int_{-\infty}^0 F_P(x) \dx .\label{eqn:light_tails}
\end{equation}
This mild requirement guarantees that the tails of the distribution function $F_P$ are light enough to avoid infinite Cram\'er distances and expected gradients (a similar condition was set by \citet{dedecker07empirical}). Now, by definition,
\begin{align*}
\gradtheta \ldive_2^2(P, \Qtheta) &= \gradtheta \int_{-\infty}^{\infty} \left ( F_{\Qtheta}(x) - F_P(x) \right )^2 \dx \\
&\overset{(a)}{=} \int_{-\infty}^{\infty} \gradtheta \left ( F_{\Qtheta}(x) - F_P(x) \right )^2 \dx \\
&= \int_{-\infty}^{\infty} 2 \left ( F_{\Qtheta}(x) - F_P(x) \right ) \gradtheta F_{\Qtheta}(x) \dx \\
&\overset{(b)}{=} \int_{-\infty}^{\infty} 2 \left ( F_{\Qtheta}(x) - \expects_{\batch} F_{\hat P_m}(x) \right ) \gradtheta F_{\Qtheta}(x) \dx \\
&= \int_{-\infty}^{\infty} 2 \expects_{\batch} \left ( F_{\Qtheta}(x) - F_{\hat P_m}(x) \right ) \gradtheta F_{\Qtheta}(x) \dx \\
&\overset{(c)}{=} \expects_{\batch} \int_{-\infty}^{\infty} 2 \left ( F_{\Qtheta}(x) - F_{\hat P_m}(x) \right ) \gradtheta F_{\Qtheta}(x) \dx \\
&= \expects_{\batch} \gradtheta \ldive_2^2(\hat P_m, \Qtheta),
\end{align*}
where (a) follows from the hypothesis \eqnref{light_tails} (the convergence of the squares follows from the convergence of the ordinary values), (b) follows from Lemma \ref{lem:expected_empirical_cdf} and (c) follows from Fubini's theorem, again invoking \eqnref{light_tails}.

Finally, we prove that \textbf{of all the $\ldive_p^p$ distances ($1 \le p \le \infty$) only the \cramer\ distance, $l_2^2$, has the (U) property.}

Without loss of generality, let us suppose $P$ is not a Dirac, and further suppose that for any $\batch \sim P$, $F_{Q_\theta}(x) \ge F_{\hat P_m}(x)$ everywhere. For example, when ${Q_\theta}$ has bounded support we can take $P$ to be a sufficiently translated version of ${Q_\theta}$, such that the two distributions' supports do not overlap.

We have already established that the 1-Wasserstein does not have the (U) property, and is equivalent to $\ldive_p^p$ for $p = 1$. We will thus assume that $p > 1$, and also that $p < \infty$, the latter being recovered through standard limit arguments. Begin with the gradient for $\ldive_p^p(P, {Q_\theta})$,
\begin{align*}
\gradtheta \ldive_p^p(P, {Q_\theta}) &= \gradtheta \int_{-\infty}^{\infty} \big | F_{Q_\theta}(x) - F_P(x) \big |^p \dx \\
&\overset{(a)}{=} p \int_{-\infty}^{\infty} \big (F_{Q_\theta}(x) - F_P(x) \big )^{p-1} \gradtheta F_{Q_\theta}(x) \dx \\
&= p \int_{-\infty}^{\infty} \phi_p(F_{Q_\theta}(x) - F_P(x)) \gradtheta F_{Q_\theta}(x) \dx \\
&= p \int_{-\infty}^{\infty} \phi_p\Big (\expects_{\batch} \big ( F_{Q_\theta}(x) - F_{\hat P_m}(x) \big ) \Big ) \gradtheta F_{Q_\theta}(x) \dx,
\end{align*}
for $\phi_p(z) = z^{p-1}$; in (a) we used the same argument as in Theorem \ref{thm:consistency}. 

Now, $\phi_p$ is convex on $[0, \infty)$ when $p \ge 2$ and concave on the same interval when $1 < p < 2$. From Jensen's inequality we know that for a convex (concave) function $\phi$ and a random variable $Z$, $\expect \phi(Z)$ is greater than (less than) or equal to $\phi(\expects Z)$, with equality if and only if $\phi$ is linear or $Z$ is deterministic. By our first assumption we have ruled out the latter. By our second assumption $F_{Q_\theta}(x) \ge F_{\hat P_m}(x)$, we can apply Jensen's inequality at every $x$ to deduce that
\begin{align*}
 \expects_{\batch} \left[ \gradtheta \ldive_p^p(\hat P_m, {Q_\theta}) \right] < \gradtheta \ldive_p^p(P, {Q_\theta}),\ \quad & \text{if $1 < p < 2$},\\
 \expects_{\batch} \left[ \gradtheta \ldive_p^p(\hat P_m, {Q_\theta}) \right] > \gradtheta \ldive_p^p(P, {Q_\theta}),\ \quad & \text{if $p > 2$},\\
 \expects_{\batch} \left[ \gradtheta \ldive_p^p(\hat P_m, {Q_\theta}) \right] = \gradtheta \ldive_p^p(P, {Q_\theta}),\ \quad & \text{if $p = 2$}.
\end{align*} 
We conclude that of the $l_p^p$ distances, only the Cram\'er distance has unbiased sample gradients.$\hfill \square$
\end{proof}

\begin{prop}
The energy distance $\cE(P, Q)$ has properties (I), (S), and (U).
\end{prop}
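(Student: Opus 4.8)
The plan is to handle the three properties separately, exploiting that the energy distance is built entirely out of expected Euclidean norms. Property (S) is immediate; property (I) is cleanest through the characteristic-function (spectral) representation of $\cE$; and property (U) follows from a direct differentiation that isolates the single $\theta$-free term in the definition \eqnref{energy_distance}.

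First, for (S): since $\norm{cX - cY}_2 = |c|\,\norm{X-Y}_2$ for $c > 0$, each of the three terms in $\cE(cX,cY) = 2\E\norm{cX-cY}_2 - \E\norm{cX-cX'}_2 - \E\norm{cY-cY'}_2$ scales by exactly $|c|$, giving $\cE(cX,cY) = |c|\,\cE(X,Y)$. Hence $\cE$ is scale sensitive of order $\beta = 1$, which is consistent with the order-$\tfrac12$ sensitivity of the univariate $\ldive_2$ since $\ldive_2^2 = \tfrac12\cE$. For (I) I would pass to the spectral form. Writing $\phi_X(t) := \E e^{i\langle t, X\rangle}$ and using the identity $\norm{x-y}_2 = c_d^{-1}\int_{\bR^d}\tfrac{1-\cos\langle t, x-y\rangle}{\norm{t}_2^{d+1}}\,dt$ for a dimension-dependent constant $c_d$, the ``$1$'' contributions cancel ($2-1-1=0$) and, using $\E\cos\langle t, X-X'\rangle = |\phi_X(t)|^2$ and $\E\cos\langle t, X-Y\rangle = \mathrm{Re}(\phi_X(t)\overline{\phi_Y(t)})$, one obtains $\cE(X,Y) = c_d^{-1}\int_{\bR^d}\tfrac{|\phi_X(t) - \phi_Y(t)|^2}{\norm{t}_2^{d+1}}\,dt$. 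Sum invariance is then transparent: by independence $\phi_{A+X}(t) = \phi_A(t)\phi_X(t)$, so $\phi_{A+X}(t) - \phi_{A+Y}(t) = \phi_A(t)(\phi_X(t)-\phi_Y(t))$, and since $|\phi_A(t)| \le 1$ pointwise we get $|\phi_{A+X}(t) - \phi_{A+Y}(t)| \le |\phi_X(t) - \phi_Y(t)|$ for every $t$; integrating yields $\cE(A+X, A+Y) \le \cE(X,Y)$. The same representation re-derives (S) through the substitution $s = ct$.

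Finally, for (U) I would differentiate $\cE(P,Q_\theta) = 2\E_{X\sim P, Y\sim Q_\theta}\norm{X-Y}_2 - \E_{X,X'\sim P}\norm{X-X'}_2 - \E_{Y,Y'\sim Q_\theta}\norm{Y-Y'}_2$ directly. The crucial observation is that the within-$P$ term carries no $\theta$-dependence and so vanishes under $\gradtheta$; note that its empirical estimate $\tfrac{1}{m^2}\sum_{i,j}\norm{X_i - X_j}_2$ is in fact a biased estimate of $\E\norm{X-X'}_2$, but this is irrelevant precisely because its gradient is identically zero. Forming $\gradtheta\cE(\hat P_m, Q_\theta)$, taking $\E_{\batch\sim P}$, and interchanging the expectation with $\gradtheta$, the cross term reduces by linearity and the i.i.d. property to $2\gradtheta\E_{X\sim P}\E_{Y\sim Q_\theta}\norm{X-Y}_2$ while the within-$Q_\theta$ term is already $\batch$-independent, recovering exactly $\gradtheta\cE(P, Q_\theta)$.

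The main obstacle is analytic rather than combinatorial: justifying the two interchanges, namely the spectral integral identity for $\norm{x-y}_2$ and, in (U), the swap of $\E_{\batch}$ with $\gradtheta$. For the latter I would impose the mild moment condition $\E\norm{X}_2, \E\norm{Y}_2 < \infty$ and invoke dominated convergence exactly as in Theorem \ref{thm:cramer_properties} and Theorem \ref{thm:consistency}, which guarantees finiteness of every term and legitimises the exchange. A convenient sanity check is the univariate case, where $\ldive_2^2 = \tfrac12\cE$ makes all three properties follow immediately from Theorem \ref{thm:cramer_properties}.
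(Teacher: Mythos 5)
Your proof is correct, and for (S) and (U) it is essentially the paper's argument: (S) is termwise homogeneity of the Euclidean norm, and (U) hinges on the same two observations the paper makes --- the within-$P$ term of $\cE(\hat P_m,\Qtheta)$ is $\theta$-free (so its bias as an estimator of $\E\norm{X-X'}_2$ is irrelevant, a point you state explicitly and the paper leaves implicit by simply dropping that term from the sample gradient), and the cross term averages back to $\E_{X\sim P}\E_{Y}\norm{X-Y}_2$ by linearity and the fact that the expected empirical distribution is $P$. The genuine difference is (I). The paper works directly with the expected norms, expanding $\cE(A+X,A+Y)$ with independent copies of $A$ and bounding the resulting three terms; you instead pass to the spectral representation $\cE(X,Y)=c_d^{-1}\int_{\bR^d}|\phi_X(t)-\phi_Y(t)|^2\,\norm{t}_2^{-(d+1)}\,\textrm{d} t$ and deduce sum invariance pointwise in $t$ from $\phi_{A+X}=\phi_A\phi_X$ and $|\phi_A(t)|\le 1$. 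Your route has an up-front cost --- the integral identity for $\norm{x}_2$ and a Tonelli/Fubini justification, for which your finite-first-moment hypothesis suffices --- but the payoff is a pointwise inequality that is immediate to verify, a free re-derivation of (S) via the substitution $s=ct$, and none of the bookkeeping over independent copies of $A$ that makes the paper's displayed inequality chain for (I) delicate to check line by line. Both arguments establish the proposition; your sanity check via $\ldive_2^2=\tfrac12\cE$ in the univariate case is also a legitimate shortcut for that special case, though the multivariate statement does require the general argument you give.
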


\begin{proof}
As before, write $\cE(X, Y) := \cE(P, Q)$. Recall that
\begin{equation*}
\cE(X, Y) = 2 \expect \norm{X - Y}_2 - \expect \norm{X - X'}_2 - \expect \norm{Y - Y'}_2 .
\end{equation*}
Consider a random variable $A$ independent of $X$ and $Y$, and let $A'$ be an independent copy of $A$. Then
\begin{align*}
\cE(A + X, A + Y) &= 2 \expect \norm{A + X - A - Y}_2 - \expect \norm{A + X - A' - X'}_2 - \expect \norm{A + Y - A' - Y'}_2 \\
&\le 2 \expect \norm{X - Y}_2 - \expect \norm{X - X'}_2 - \expect \norm{Y - Y'}_2 - 2 \expect \norm{A - A'}_2 \\
&\le 2 \expect \norm{X - Y}_2 - \expect \norm{X - X'}_2 - \expect \norm{Y - Y'}_2 \\
&= \cE(X, Y).
\end{align*}
This proves (I). Next, consider a real value $c > 0$. We have
\begin{align*}
\cE(cX, cY) &= 2 \expect \norm{cX - cY}_2 - \expect \norm{cX - cX'}_2 - \expect \norm{cY - cY'}_2 \\
&= 2 c \expect \norm{X - Y}_2 - c \expect \norm{X - X'}_2 - c \expect \norm{Y - Y'}_2 \\
&= c \cE(X, Y) .
\end{align*}
This proves (S). Finally, suppose that $Y$ is distributed according to $\Qtheta$ parametrized by $\theta$. Let $\batch = X_1, \dots, X_m$ be drawn from $P$, and let $\hat P_m := \tfrac{1}{m} \sum_{i=1}^m \delta_{X_i}$. Let $\hat X$ be the random variable distributed according to $\hat P_m$, and $\hat X'$ an independent copy of $\hat X$. Then
\begin{equation*}
\cE(\hat P_m, \Qtheta) = \cE(\hat X, Y) = 2 \expect \bignorm{\hat X - Y}_2 - \expect \bignorm{\hat X - \hat X'}_2 - \expect \bignorm{Y - Y'}_2 .
\end{equation*}
The gradient of the true loss w.r.t. $\theta$ is
\begin{equation}
\grad_\theta \cE(X, Y) = 2 \grad_\theta \expect \bignorm{X - Y}_2 - \grad_\theta \expect \bignorm{Y - Y'}_2 . \label{eqn:energy_distance_gradient}
\end{equation}
Now, taking the gradient of the sample loss w.r.t. $\theta$,
\begin{equation}
\grad_\theta \cE(\hat X, Y) = 2 \grad_\theta \expect \bignorm{\hat X - Y}_2 - \grad_\theta \expect \bignorm{Y - Y'}_2 . \label{eqn:energy_distance_sample_gradient}
\end{equation}
Since the second terms of the gradients match, all we need to show is that the first terms are equal, in expectation. Assuming that $\gradtheta$ and the expectation over $\batch$ commute, we write
\begin{align*}
\expect_{\batch} \grad_\theta \expect \bignorm{\hat X - Y}_2 &= \grad_\theta \expect_{\batch} \expect \bignorm{\hat X - Y}_2 \\
&= \grad_\theta \expect_{\batch} \expect \expect_{x \sim \hat P_m} \bignorm{x - Y}_2,
\end{align*}
by independence of $X$ and $Y$. But now we know that the expected empirical distribution is $P$, that is
\begin{equation*}
\expect_{\batch} \expect_{x \sim \hat P_m} \expect \bignorm{x - Y}_2 = \expect_{x \sim P} \expect \bignorm{x - Y}_2 = \expect \bignorm{X - Y}_2 .
\end{equation*}
It follows that the first terms of \eqnref{energy_distance_gradient} and \eqnref{energy_distance_sample_gradient} are also equal, in expectation w.r.t. $\batch$. Hence we conclude that the energy distance has property (U), that is
\begin{equation*}
\expect_{\batch \sim P} \grad_\theta \cE(\hat P_m, \Qtheta) = \grad_\theta \cE(P, \Qtheta) .
\end{equation*}
\end{proof}

\section{Comparison with the Wasserstein Distance}

Figure \ref{fig:ordinalregression} (left) provides learning curves for the toy experiment described in Section \ref{sec:impact_wasserstein_bias}.

\begin{figure*}
\begin{center}
\includegraphics[width=\textwidth]{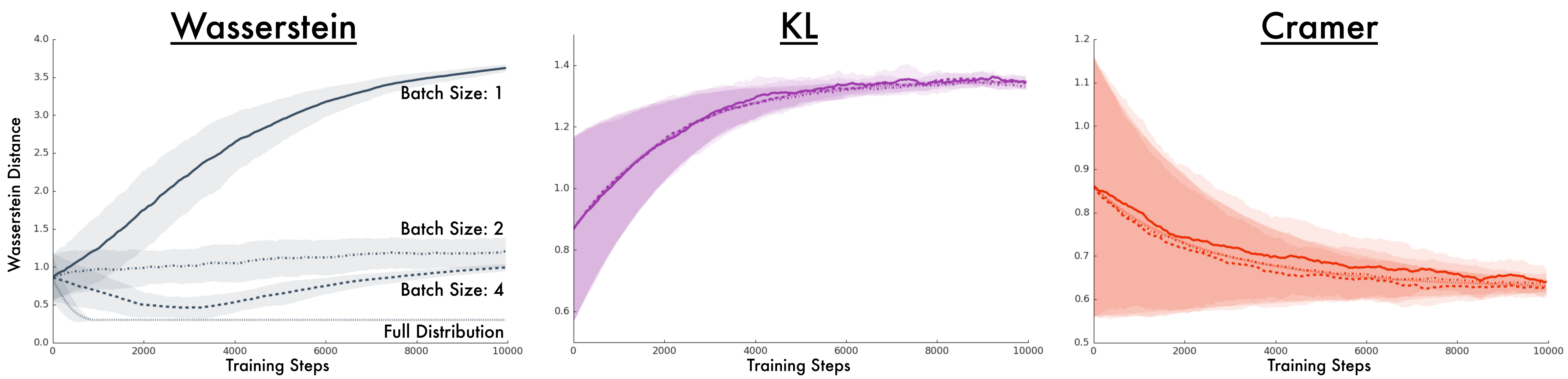}
\end{center}
\caption{Wasserstein while training to minimize different loss functions (Wasserstein, KL, Cram\'er). Averaged over $10$ random initializations. Error-bands indicate one standard deviation. Note the different y-axes.
\label{fig:loss_comparison_toy_example_appendix}}
\end{figure*}

\subsection{Ordinal Regression}

 We compare the different losses on an ordinal regression task using the Year Prediction MSD dataset from \citep{ucirepository}. The task is to predict the year of a song (taking on values from 1922 to 2011), from 90-dimensional feature representation of the song.\footnote{We refer to \url{https://archive.ics.uci.edu/ml/datasets/YearPredictionMSD} for the details.} Previous work has used this dataset for benchmarking regression performance \citep{PB}, treating the target as a continuous value. Following \citet{PB}, we train a network with a single hidden layer with 100 units and ReLU non-linearity, using SGD with 40 passes through the training data, using the standard train-test split for this dataset \citep{ucirepository}. Unlike \citep{PB}, the network outputs a probability distribution over the years (90 possible years from 1922-2011). 
 
 We train models using either the 1-Wasserstein loss, the Cram\'er loss, or the KL loss, the latter of which reduces the ordinal regression problem to a classification problem. In all cases, we compare performance for three different minibatch sizes, i.e. the number of input-target pairs per gradient step.
Note that the minibatch size only affects the gradient estimation, but has otherwise no direct relation to the number of samples $m$ previously discussed, since each sample corresponds to a different input vector.
We report results as a function of number of passes over the training data so that our results are comparable with previous work, but note that smaller batch sizes get more updates.

The results are shown in Figure~\ref{fig:ordinalregression}. Training using the Cram\'er loss results in the lowest root mean squared error (RMSE) and the final RMSE value of 8.89 is comparable to regression \citep{PB} which directly optimizes for MSE. 
We further observe that minimizing the Wasserstein loss trains relatively slowly and leads to significantly higher KL loss. Interestingly, larger minibatch sizes do seem to improve the performance of the Wasserstein-based method somewhat, suggesting that there might be some beneficial bias reduction from combining similar inputs. By contrast, using with the Cram\'er loss trains significantly faster and is more robust to choice of minibatch size. 

\begin{figure}%
\begin{center}
\includegraphics[width=1\textwidth]{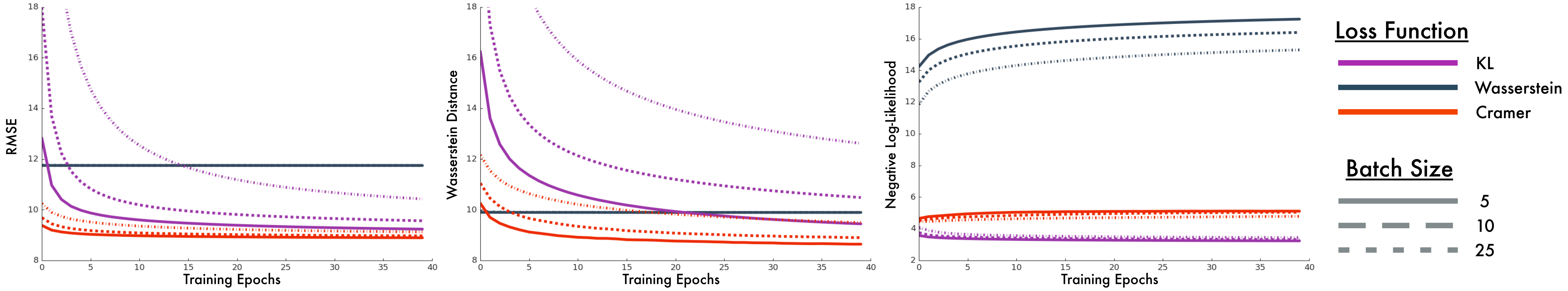}
\caption{Ordinal regression on the year prediction MSD dataset. Each loss function trained with various minibatch sizes. Training progress shown in terms of: \textbf{Left.} RMSE, \textbf{Middle.} Wasserstein distance, \textbf{Right.} Negative log-likelihood.}
\label{fig:ordinalregression_batch_size}
\end{center}
\end{figure}

\subsection{Image Modelling with PixelCNN}

As additional supporting material, we provide here the results of experiments on learning a probabilistic generative model on images using either the 1-Wasserstein, Cram{\'e}r, or KL loss. We trained a PixelCNN model \citep{vandenoord16pixel} on the CelebA 32x32 dataset \citep{liu2015faceattributes}, which is constituted of 202,599 images of celebrity faces.
At a high level, probabilistic image modelling involves defining a joint probability $Q_\theta$ over the space of images. PixelCNN forms this joint probability autoregressively, by predicting each pixel using a histogram distribution conditional on a probability-respecting subset of its neighbours. This kind of modelling task is a perfect setting to study Wasserstein-type losses, as there is a natural ordering on pixel intensities. This is also a setting in which full distributions are almost never available, because each prediction is conditioned on very different context; and hence we require a loss that can be optimized from single samples. Here the true losses are not available. Instead we report the sample Wasserstein loss, which is an upper bounds on the true loss \citet[proof is provided by][]{bellemare17distributional}. For the KL divergence we report the cross-entropy loss, as is typically done; the KL divergence itself corresponds to the expected cross-entropy loss minus the real distribution's (unknown) entropy.

\begin{figure*}
\begin{minipage}{0.63\textwidth}
\includegraphics[width=\textwidth]{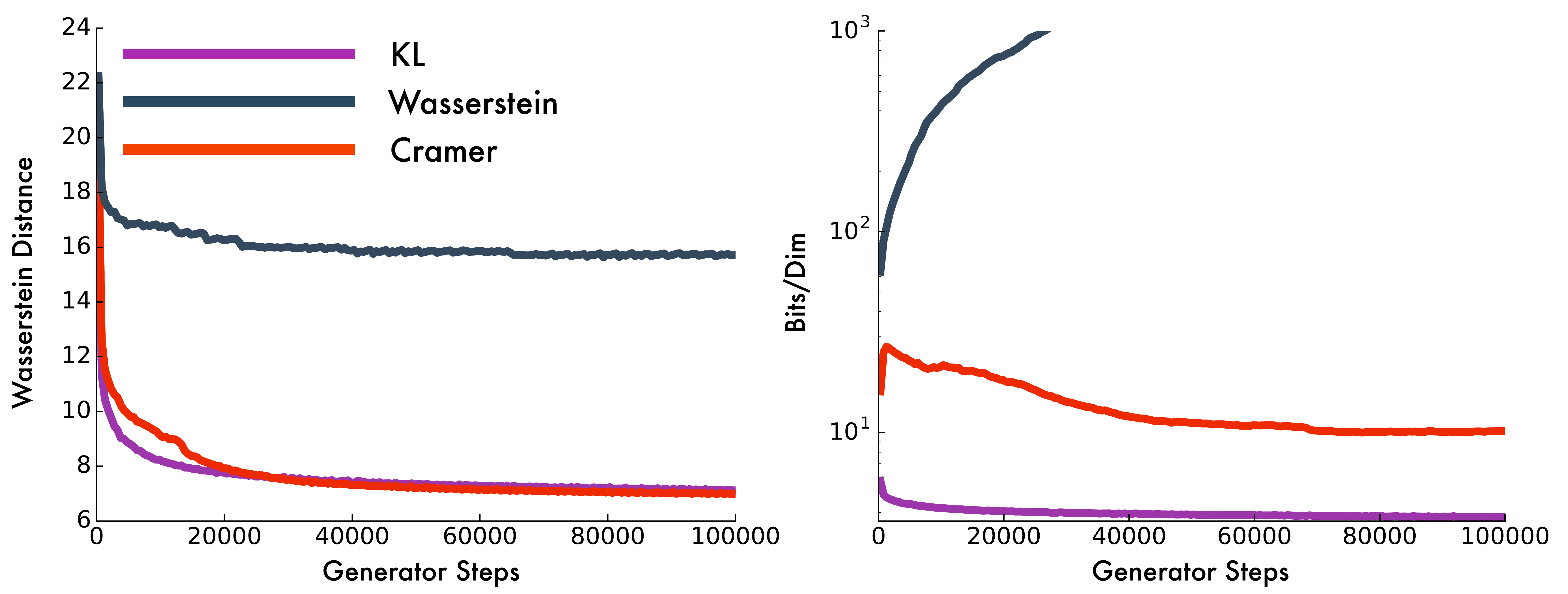}
\end{minipage}
\begin{minipage}{0.2\textwidth}
\ra{1.2}
\begin{tabular}{@{}llll@{}} \toprule
min. & \multicolumn{3}{c}{test loss} \\
\cmidrule{2-4}
& KL & Cram\'er & Wass. \\
\midrule
KL & \highlight{3.76} & 3.55 & 7.10 \\
Cram\'er & 10.09 & \highlight{3.51} & \highlight{7.02} \\
Wass. & 4016 & 15.99 & 16.00 \\
\bottomrule
\end{tabular}
\end{minipage}
\caption{\textbf{Left, middle.} Sample Wasserstein and cross-entropy loss curves on the CelebA validation data set. \textbf{Right.} Test loss at the end of training, in function of loss minimized (see text for details).\label{fig:pixelcnn_celeba_losses}}
\end{figure*}

Figure \ref{fig:pixelcnn_celeba_losses} shows, as in the toy example, that minimizing the Wasserstein distance by means of stochastic gradient fails. The Cram{\'e}r distance, on the other hand, is as easily minimized as the KL and in fact achieves lower Wasserstein and Cram{\'e}r loss. We note that the resulting KL loss is higher than when directly minimizing the KL, reflecting the very real trade-off of using one loss over another. We conclude that in the context of learning an autoregressive image model, the Cram{\'e}r should be preferred to the Wasserstein metric.

\section{Cram\'er GAN}

\begin{figure*}[t]
\begin{center}
\begin{minipage}{0.47\textwidth}
\includegraphics[width=\linewidth]{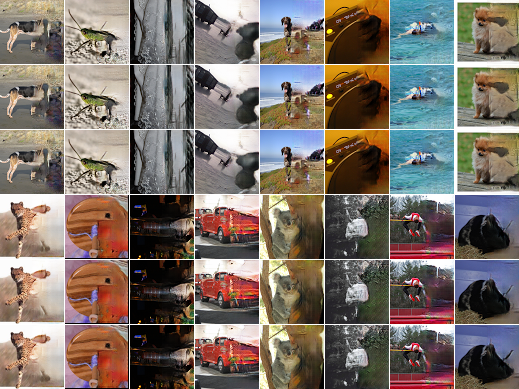}
\end{minipage}
\hskip 0.5cm
\begin{minipage}{0.47\textwidth}
\includegraphics[width=\linewidth]{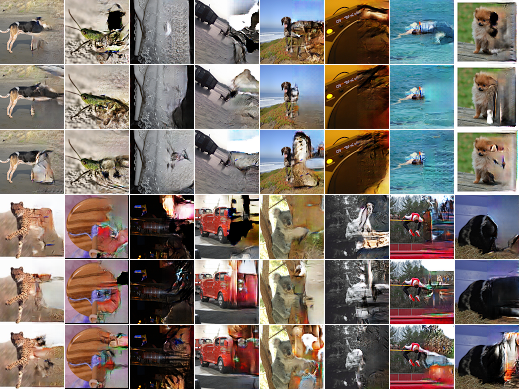}
\end{minipage}
\end{center}
  \caption{Generated right halves for WGAN-GP (left) and Cram\'er GAN (right) for left halves from the validation set of Downsampled ImageNet 64x64 \citep{vandenoord16pixel}. The low diversity in WGAN-GP samples is consistent with the observations of \citet{isola2016image}: \emph{``the generator simply learned to ignore
  the noise.''}}
\label{fig:cond_imagenet64}
\end{figure*}

\subsection{Loss Function Details}
Our critic has a special form:
\begin{align*}
  f(x) = \expect_{Y' \sim Q} \|h(x) - h(Y')\|_2 - \expect_{X' \sim P} \|h(x) - h(X')\|_2
\end{align*}
where $Q$ is the generator and $P$ is the target distribution.
The critic has trainable parameters only inside the deep network used for the transformation $h$.
From \eqnref{energy_distance_dual}, we define the generator loss to be
\begin{align}
  L_\mathit{g}(X, Y) = \expect_{X \sim P} [f(X)] - \expect_{Y \sim Q} [f(Y)],
  \label{eqn:cramer_generator_loss}
\end{align}
as in Wasserstein GAN, except that no $\max_f$ operator is present and we can obtain unbiased sample gradients.
At the same time, to provide helpful gradients for the generator,
we train the transformation $h$ to maximize the generator loss.
Concretely, the critic seeks to maximize the generator loss while minimizing a gradient penalty:
\begin{align}
  L_\mathit{critic}(X, Y) &= -L_\mathit{g}(X, Y) + \lambda \mathrm{GP} \label{eqn:cramer_critic_loss}
\end{align}
where $\mathrm{GP}$ is the gradient penalty from the original WGAN-GP algorithm \citep{gulrajani2017improved} (the penalty is given in Algorithm~1). The gradient penalty bounds the critic's outputs without using a saturating function. We chose $\lambda = 10$ from a short parameter sweep. Our training is otherwise similar to the improved training of Wasserstein GAN \citep{gulrajani2017improved}.

\subsection{Gradient Estimates for the Generator}

Recall that the energy distance is:
\begin{align*}
  \cE(X, Y) &= 2 \expect_{\substack{X \sim P \\ Y \sim Q}} \norm{X - Y}_2
    - \expect_{\substack{X \sim P \\ X' \sim P}} \norm{X - X'}_2
    - \expect_{\substack{Y \sim Q \\ Y' \sim Q}} \norm{Y - Y'}_2
\end{align*}

If $Y$ is generated from the standard normal noise $Z \sim N(0, 1)$
by a continuously differentiable generator $Y = G(Z)$, we can use
the reparametrization trick \citep{kingma2013auto}
to compute the gradient with respect to the generator parameters:
\begin{align*}
  \grad_{\theta_G} \cE(X, Y) &= 2 \expect_{\substack{X \sim P \\ Z \sim N(0, 1)}} \grad_{\theta_G} \norm{X - G(Z)}_2
    - \expect_{\substack{Z \sim N(0, 1) \\ Z' \sim N(0, 1)}} \grad_{\theta_G} \norm{G(Z) - G(Z)'}_2 .
\end{align*}

\subsection{Gradient Estimates for the Transformation}

As shown in the previous section, we can obtain an unbiased gradient estimate of the generator loss \eqnref{cramer_generator_loss} from three samples: two from the generator, and one from the target distribution. However, to estimate the gradient of the \cramer~GAN loss with respect to the transformation parameters we need \emph{four} independent samples: two from the generator and two from the target distribution. In many circumstances, for example when learning conditional densities, we do not have access to two independent target samples. As an approximation, we can instead assume that $h(x) \approx 0$ for $x \sim P$. This leads to the approximate critic
\begin{align*}
  f_\mathit{surrogate}(x) = \expect_{Y \sim Q} \|h(x) - h(Y)\|_2 - \|h(x)\|_2
\end{align*}
which we use to define a loss $L_\mathit{surrogate}(X, Y)$ similar to \eqnref{cramer_generator_loss} with which we train the critic. In practice, it works well to train also the generator
to minimize the $L_\mathit{surrogate}$ loss.
The whole training procedure is summarized as Algorithm~1.

\subsection{Generator Architecture}
The generator architecture is the U-Net \citep{ronneberger2015u} previously used for Image-to-Image translation \citep{isola2016image}.
We used no batch normalization and no dropout in the generator and in the critic. The network conditioned on the left half of the image and on extra 12 channels with Gaussian noise.
We generated two independent samples for a given image to compute the Cram\'er GAN loss. To be computationally fair to WGAN-GP, we trained WGAN-GP with twice the minibatch size (i.e., the Cram\'er GAN minibatch size was 64, while the WGAN-GP minibatch size was 128).

\subsection{Critic Architecture}
Our $h(x)$ transformation is a deep network with $256$ outputs (more is better).  The network has the traditional deep convolutional architecture \citep{radford2015unsupervised}.
We do not use batch normalization, as it would conflict with the gradient penalty.

\section{Relation of the Energy Distance to Maximum Mean Discrepancy}

\begin{figure*}
\begin{center}
\begin{minipage}{0.47\textwidth}
\includegraphics[width=\linewidth]{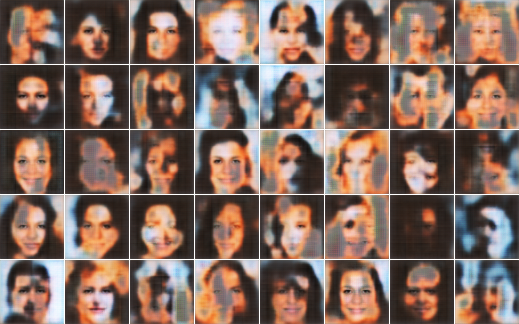}
\end{minipage}
\hskip 0.5cm
\begin{minipage}{0.47\textwidth}
\includegraphics[width=\linewidth]{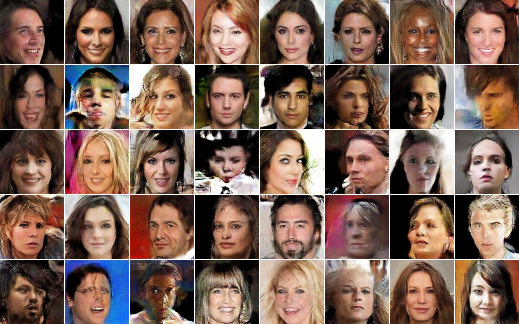}
\end{minipage}
\end{center}
\caption{
  \textbf{Left.} Generated images from a generator trained to minimize the energy distance of raw images, $\cE(X, Y)$.
    \textbf{Right.} Generated images if minimizing the Cram\'er GAN loss, $\cE(h(X), h(Y))$. Both generators had the same DCGAN architecture \citep{radford2015unsupervised}.}
\label{fig:uncond_celeba64}
\end{figure*}

The energy distance can be generalized by replacing the 2-norms $\|x - y\|_2$ with a different distance.
\citet{szekely02estatistics} discusses the usage of $\|x - y\|_2^\alpha$. For $\alpha \in (0, 2)$, the energy distance will be zero iff $X$ and $Y$ are identically distributed. For $\alpha=2$, the energy distance degenerates to the difference of expectations: $2 \norm{\expect(X) - \expect(Y)}_2^2$.

\citet{sejdinovic2013equivalence} showed the equivalence of the energy distance and the squared maximum mean discrepancy \citep[MMD;][]{gretton2012kernel} with kernel $\minus\norm{x - y}_2$. In that sense, our loss can be viewed as squared MMD, computed for transformed variables.
Interestingly, in our experiments, $\|x - y\|_2^\alpha$ distances were more stable than distances generated by Gaussian or Laplacian kernels.
Generative Moment Matching Networks \citep{li2015generative,dziugaite2015training} were previously used to train a generator to minimize the squared MMD.
The minimization of the energy distance of raw images does not work well. We found the transformation $h(x)$ very helpful. Figure~\ref{fig:uncond_celeba64} shows the obtained samples, if minimizing $\cE(X, Y)$ of untransformed images.
Recently, \citet{mroueh2017mcgan} reported good results with mean and covariance feature matching GAN (McGan). McGan is also training a transformation to produce better features for the loss. An unbiased estimator exists for the squared MMD and \citet{dziugaite2015training} used the unbiased estimator.

An independent work by \citet{mmdgan} combine Generative Moment Matching Networks with a learned transformation. The transformation is driven to be injective by an additional auto-encoder loss.

\end{document}